  \providecommand\BibTeX{{%
    \normalfont B\kern-0.5em{\scshape i\kern-0.25em b}\kern-0.8em\TeX}}}
\newtheorem{definition}{Definition}
\newtheorem{theorem}{Theorem}
\begin{document}

\title{Unbiased and Efficient Self-Supervised Incremental Contrastive Learning}

\author{Cheng Ji}
\affiliation{%
  \institution{Beihang University}
  \city{Beijing}
  \country{China}}
\email{jicheng@act.buaa.edu.cn}

\author{Jianxin Li}
\affiliation{%
  \institution{Beihang University}
  \city{Beijing}
  \country{China}}
\email{lijx@act.buaa.edu.cn}

\author{Hao Peng}
\affiliation{%
  \institution{Beihang University}
  \city{Beijing}
  \country{China}}
\email{penghao@act.buaa.edu.cn}

\author{Jia Wu}
\affiliation{%
  \institution{Macquarie University}
  \city{Sydney}
  \country{Australia}}
\email{jia.wu@mq.edu.au}

\author{Xingcheng Fu}
\affiliation{%
  \institution{Beihang University}
  \city{Beijing}
  \country{China}}
\email{fuxc@act.buaa.edu.cn}

\author{Qingyun Sun}
\affiliation{%
  \institution{Beihang University}
  \city{Beijing}
  \country{China}}
\email{sunqy@act.buaa.edu.cn}

\author{Philip S. Yu}
\affiliation{%
  \institution{University of
Illinois at Chicago}
  \city{Chicago}
  \country{USA}}
\email{psyu@uic.edu}

\renewcommand{\shortauthors}{Cheng Ji, et al.}

\begin{abstract}
Contrastive Learning (CL) has been proved to be a powerful self-supervised approach for a wide range of domains, including computer vision and graph representation learning. 
However, the incremental learning issue of CL has rarely been studied, which brings the limitation in applying it to real-world applications. 
Contrastive learning identifies the samples with the negative ones from the noise distribution that changes in the incremental scenarios.
Therefore, only fitting the change of data without noise distribution causes bias, and directly retraining results in low efficiency. 
To bridge this research gap, we propose a self-supervised \emph{Incremental Contrastive Learning} (ICL) framework consisting of 
(i) a novel Incremental InfoNCE (NCE-II) loss function by estimating the change of noise distribution for old data to guarantee no bias with respect to the retraining,
(ii) a meta-optimization with deep reinforced Learning Rate Learning (LRL) mechanism which can adaptively learn the learning rate according to the status of the training processes and achieve fast convergence which is critical for incremental learning. 
Theoretically, the proposed ICL is equivalent to retraining, which is based on solid mathematical derivation.
In practice, extensive experiments in different domains demonstrate that, without retraining a new model, ICL achieves up to $16.7\times$ training speedup and $16.8\times$ faster convergence with competitive results.

\end{abstract}

\begin{CCSXML}
<ccs2012>

   <concept>
       <concept_id>10010147.10010257.10010282.10010284</concept_id>
       <concept_desc>Computing methodologies~Online learning settings</concept_desc>
       <concept_significance>500</concept_significance>
       </concept>
   <concept>
       <concept_id>10010147.10010257.10010258.10010260</concept_id>
       <concept_desc>Computing methodologies~Unsupervised learning</concept_desc>
       <concept_significance>500</concept_significance>
       </concept>

   <concept>
       <concept_id>10002951.10002952.10002953.10010820.10003208</concept_id>
       <concept_desc>Information systems~Data streams</concept_desc>
       <concept_significance>500</concept_significance>
       </concept>
   <concept>
       <concept_id>10002951.10003227.10003351.10003446</concept_id>
       <concept_desc>Information systems~Data stream mining</concept_desc>
       <concept_significance>500</concept_significance>
       </concept>
 </ccs2012>
\end{CCSXML}

\ccsdesc[500]{Computing methodologies~Online learning settings}

\ccsdesc[500]{Information systems~Data streams}
\ccsdesc[500]{Information systems~Data stream mining}

\keywords{Contrastive learning, incremental learning, self-supervised learning.}

\maketitle

\section{Introduction}\label{sec:introduction}

Contrastive Learning (CL) is a widely used self-supervised learning approach across a wide range of domains, 
such as computer vision (CV)~\cite{he2020momentum,chen2020simple}, 
neural language processing (NLP)~\cite{logeswaran2018efficient,oord2018representation,li2022survey},
and graph representation learning (GRL)~\cite{qiu2020gcc,you2020graph,sun2021sugar}. 
The main idea of contrastive learning is to make representations of similar samples close and distinct samples far away through a noise contrastive estimation (NCE)~\cite{gutmann2010noise,gutmann2012noise} loss function with the given noise distribution~\cite{oord2018representation}. 
Moreover, in real-world application scenarios, online systems are expected to constantly face new data and learn incrementally, which limits the applicability of contrastive learning. The data distribution and noise distribution are incrementally observed and the estimation by NCE is thus biased. Nevertheless, there is little work to study the incremental learning issue of contrastive learning (ICL, Incremental Contrastive Learning).

Incremental learning aims to learn the new data while not forgetting the old data (see the formal definition in Definition~\ref{def:incremental_contrastive_learning}). It requires the learning model to face the stability-plasticity dilemma~\cite{mermillod2013stability} with the following three characteristics: \textbf{(a) stability for the old data,} which means that the model should contain the ability to remember and update the knowledge of the old data, \textbf{(b) plasticity for the new data,} which requires that the model should be able to adapt to the new data, \textbf{(c) efficiency of the training process,} which helps the model update quickly in real-world applications, especially in a streaming environment.

\begin{table}[t]
    \centering
    \caption{Comparison of different strategies.}
    \renewcommand\arraystretch{1.2}
    \begin{threeparttable}
    \begin{tabular}{cccc}
        \toprule
        & \textbf{Stability} & \textbf{Plasticity} & \textbf{Efficiency}\\
        \midrule
        Inference           & $\times$ & $\times$ & $\checkmark$ \\
        Fine-tuning           & $\times$ & $\times$ & $\checkmark$ \\
        Retraining          & $\checkmark$ & $\checkmark$ & $\times$ \\
        \textbf{ICL (Ours)} & $\checkmark$ & $\checkmark$ & $\checkmark$ \\
        \bottomrule
    \end{tabular}
    \begin{tablenotes}
        \item[1] $\checkmark$ : ``unbiased'' or ``high'', $\times$ : ``biased'' or ``low''.
    \end{tablenotes}
    \end{threeparttable}
    \label{tab:1}
\end{table}

The major challenge of contrastive learning in the incremental setting is how to estimate the change of the noise distribution. 
The noise distribution in contrastive learning is used to sample the negative ones and is related to and close to the data distribution~\cite{gutmann2012noise,oord2018representation}. Therefore, as the data change, the noise distribution also changes. 
Based on the trained model, the old data has been learned through NCE by contrasting the negative samples from the original noise distribution. However, there is a bias in the estimation after giving the new data since the noise distribution used for sampling the negative samples has changed. The existing NCE methods cannot be applied to fit the change (we provide the bias analysis in Section~\ref{sec:methodology}).
Regardless of the trained model, retraining seems to be the best way due to no bias in the process. However, it has a serious time-consuming problem.
Therefore, in this paper, we seek to answer the question: \emph{can contrastive learning approaches be able to estimate the old data through an unbiased NCE w.r.t. retraining, while learning the new data efficiently?}

The naive strategies (e.g., inference, fine-tuning, and retraining) fail to answer this question due to the above three requirements of incremental learning as shown in Table~\ref{tab:1}:
\begin{itemize}
    \item \textbf{Poor Stability.} Without a carefully designed NCE strategy, both inference and fine-tuning lead to a biased estimate due to the change in the noise distribution. In addition, directly fine-tuning the model on new data causes catastrophic forgetting~\cite{mccloskey1989catastrophic}.
    \item \textbf{Weak Plasticity.} Inference is not capable of learning the new data and fine-tuning is over-transferring, ignoring the contribution of old data to the noise distribution. Moreover, bias will continue to accumulate in practical online applications.
    \item \textbf{Low Efficiency.} While retraining a new model with all data does not seem to have the above issues, the time cost is unacceptable in real-world applications.
\end{itemize}

However, there is little work on incremental learning in CL, except for applying the replay and distillation technique~\cite{cha2021co,lin2021continual}.
Moreover, existing incremental learning approaches are hard to use for self-supervised CL because they focus mainly on class-incremental learning~\cite{mai2021supervised} and task-incremental learning~\cite{serra2018overcoming}. Without any information of labels and tasks, self-supervised incremental contrastive learning is still facing a research gap.

\textbf{Contributions.} To this end, we propose an unbiased and efficient self-supervised \emph{Incremental Contrastive Learning} (ICL) framework. First, we design an \textbf{I}ncremental \textbf{I}nfo\textbf{NCE} (\textbf{NCE-II}) loss function to fit the change of noise distribution. Furthermore, we accelerate the convergence by a meta-optimization algorithm with a reinforced Learning Rate Learning (LRL) mechanism. Finally, we conduct thorough experiments on four datasets of CV and GRL to show the efficiency and effectiveness of ICL.
The main contributions of this paper are summarized as follows:
\begin{itemize}
    \item Leveraging a new metric for measuring the change of the noise distribution, we design a novel NCE-II loss to theoretically achieve an unbiased ICL with respect to retraining. To our best knowledge, it is the first attempt to study the issue of unbiased incremental learning in self-supervised CL.
    \item We propose a meta-optimization algorithm with LRL to adaptively learn the learning rates according to the status of the training process for fast convergence.
    \item Extensive experiments demonstrate that ICL maintains high efficiency in terms of training time and convergence epoch. In addition, as an unbiased and efficient approach, ICL still achieves competitive results.
\end{itemize}

\section{Background and problem formulation}\label{sec:background}

We first provide the background and problem formulation of contrastive learning and incremental contrastive learning.

\begin{definition}[\textbf{Contrastive Learning}]\label{def:contrastive_learning}
Given the set of input data $X \coloneqq \{x_i \in \mathcal{X}\}_{i=1}^N$ of size $N$ and an encoder $\phi(\cdot) :  \mathcal{X}  \to  \mathcal{Z}$ mapping the inputs from the data space $\mathcal{X}$ into the latent space $\mathcal{Z}$, contrastive learning aims to train the encoder with a contrastive loss function $\mathcal{L}$ designed to identify the positive sample $x_i^+$ for a given $x_i$.
\end{definition}

In this paper, we consider the most commonly used contrastive learning framework~\cite{chen2020simple,he2020momentum,logeswaran2018efficient,you2020graph} with the \textbf{I}nfo\textbf{NCE} (denoted as \textbf{NCE-I} in this paper) as the loss function $\mathcal{L}$.

\begin{definition}[\textbf{InfoNCE (NCE-I)}]\label{def:nce}
Given the set of input data $X \coloneqq \{x_i \in \mathcal{X}\}_{i=1}^N$ and an encoder $\phi(\cdot)$, the InfoNCE (NCE-I) loss is defined as:
\begin{equation}\label{eq:infonce}
    \mathcal{L}_i^{I,X} = - \log \frac{f(x_i,x_i^+)}{f(x_i,x_i^+) + K\mathbb{E}_{x_i^- \sim p_n^X} f(x_i,x_i^-)},
\end{equation}
where $f(\cdot,\cdot) \coloneqq \exp(sim(\phi(\cdot),\phi(\cdot))/\tau)$ with $sim(\cdot, \cdot)$ as the similarity measurement and $\tau$ as the temperature parameter,  $x_i^+$ represents the positive sample generated by a data augmentation module, $x_i^-$ is one of the negative samples from a noise distribution, and $K$ is a hyperparameter representing the ratio of negative samples to positive samples. 
\end{definition}

\textbf{Noise Distribution.}
The InfoNCE loss follows the noise contrastive estimation (NCE) principle~\cite{gutmann2010noise,gutmann2012noise}, which learns to deduce the properties of data $X$ by comparing the difference to the reference (noise) data $Y$. The noise data $Y$ is an i.i.d. sample $\{y_1,y_2,\dots,y_K\}$ from a random variable with noise distribution $p_n$. In this paper, the negative ones are from the data distribution by a uniform sampling which can be approximated by Monte Carlo sampling~\cite{shapiro2003monte}. We randomly sample a mini-batch of $K + 1$ inputs and treat the others as the negative samples for each one.

\begin{definition}[\textbf{Incremental \, Contrastive \, Learning}]\label{def:incremental_contrastive_learning}
Given an encoder $\phi(\cdot)$ trained on old data $X \coloneqq \{x_i \in \mathcal{X}\}_{i=1}^N$ with a contrastive learning approach and the new data $\Delta X \coloneqq \{x_{N+i} \in \mathcal{X}\}_{i=1}^{\Delta N}$ that has not been observed, the incremental contrastive learning aims to refine the encoder for adapting to the new data without forgetting the knowledge of old data, i.e., stability and plasticity. 
\end{definition}

\textbf{Noise Distribution Change.}
As mentioned above, the negative ones in contrastive learning are randomly sampled from the dataset which changes in the incremental setting. That is, the noise distribution changes.

Finally, we denote $X' \coloneqq \{x_i \in X \cup \Delta X\}$ as all data.
For generality, we focus on one incremental step and note that the old data $X$ represent the data used for training the encoder before and preserved by the online system through a memory queue or replay technique.
Therefore, the study in this paper can be conveniently implemented to a number of contrastive learning and incremental learning methods.

\section{Methodology}\label{sec:methodology}
In this section, we propose an unbiased and efficient self-supervised \emph{Incremental Contrastive Learning} (ICL) framework.

\subsection{Overall Framework}\label{sec:framework}
The proposed ICL framework consists of the following components:
\begin{enumerate}
    \item \textbf{Augmentation.} Given each input $x_i$ (e.g., an image or a graph), two augmentations $q_1(\cdot|x_i)$  and $q_2(\cdot|x_i)$ are applied to $x_i$ to obtain a positive pair $(x_i,x_i^+)$. For different domains of datasets (i.e., CV and GRL), different augmentation strategies are applied (Section~\ref{exp_deatils}).
    \item \textbf{Encoder.} An encoder $\phi(\cdot) :  \mathcal{X}  \to  \mathcal{Z}$ is used to learn the latent representations for each positive pair $(x_i,x_i^+)$. Specifically, a ResNet-18~\cite{he2016deep} and a GCN~\cite{kipf2016semi} are used for CV an GRL respectively.
    \item \textbf{Incremental InfoNCE (NCE-II).} To resolve the problems mentioned in Section~\ref{sec:introduction}, we design a novel loss function, named \textbf{NCE-II}, to eliminate the bias caused by the change of noise distribution. Furthermore, the proposed NCE-II maintains equivalence with retraining, and the error bound of final empirical risk tends to zero (Section~\ref{sec:nce-ii}).
    \item \textbf{Meta-optimization with Learning Rate Learning (LRL).} In order to further improve the efficiency of the training process, which is crucial for incremental learning, we proposed a new meta-learning optimization algorithm with a reinforced learning rate learning mechanism for fast convergence (Section~\ref{sec:optimization}).
\end{enumerate}
In the following sections, we introduce the proposed objective function (NCE-II) and optimization algorithm (meta-optimization with LRL).

\subsection{Objective Function: Unbiased Estimation}\label{sec:nce-ii}

We split the final objective function into two parts: one for old data and the other for new data.
The new data $\Delta X$ can be learned by NCE-I with the noise distribution $p_n^{X'}$:
\begin{equation}\label{eq:infonce_for_old}
    \mathcal{L}_i^{I,X'} = - \log \frac{f(x_i,x_i^+)}{f(x_i,x_i^+) + K\mathbb{E}_{x_i^- \sim p_n^{X'}} f(x_i,x_i^-)}.
\end{equation}

However, for stability as discussed in Section~\ref{sec:introduction}, InfoNCE loss cannot serve as the objective function for the old data. Given the encoder trained with the old data $X$, it is necessary to find an optimization method for $X$, so that the entire training process (including the incremental learning phase) with $X$ is unbiased.

\subsubsection{Motivation: Change of Noise Distribution}
In order to eliminate the deviation caused by the change of the noise distribution in the learning of the old data,  we first propose a metric for measuring the change ratio of the noise distribution. 

To explore how noise distribution works in contrastive learning, we first rewrite the InfoNCE loss into the form of softmax-based categorical cross-entropy:
\begin{align}
    \mathcal{L}_i^{I,X} 
    &= \sum_{x_j \in X} \mathds{1}_{i=j} \cdot - \log \left( \frac{f(x_i,x_j^+)}{\sum_{x_k \in X} f(x_i,x_k^+)} \right)\\
    &= \sum_{x_j \in X} \mathds{1}_{i=j} \cdot - \log \left( \text{softmax}_X (f(x_i,x_j^+)) \right). \label{eq:softmax}
\end{align}
The softmax term in Eq.~\eqref{eq:softmax} represents the final prediction result of the model, in which the sum of predicted probabilities for all classes changes with the noise distribution, that is, $\sum_{x_k \in X} f(x_i,x_k^+) \to \sum_{x_{k} \in {X \cup \Delta  X}} f(x_i,x_k^+)$. 
Therefore, we propose to use the ratio of the sum of predicted probabilities for new classes to the one for old classes.




\begin{definition}[\textbf{Change Ratio of Noise Distribution}]\label{def:change_rate}
Given the noise distribution $p_n^X$ of old data $X$ and $p_n^{\Delta  X}$ of new data $\Delta  X$, the change ratio is represented by 
\begin{align}
    r_{i}^{X \to \Delta  X} 
    &= \frac{\sum_{x_j \in \Delta  X \cup \{x_i\}} f(x_i,x_j^+)}{\sum_{x_k \in X} f(x_i,x_k^+)}\\
    &= \frac{f(x_i,x_i^+)  +  K\mathbb{E}_{x_i^- \sim p_n^{\Delta  X}}  f(x_i,x_i^-)}{f(x_i,x_i^+)  +  K\mathbb{E}_{x_i^- \sim p_n^X}  f(x_i,x_i^-)}. \label{eq:change_ratio}
\end{align}
\end{definition}

With the proposed change ratio in Eq.~\eqref{eq:change_ratio}, we can measure how much the noise distribution changes. If the new data maintain the same noise distribution as the old data (i.e., $p_n^{\Delta  X}=p_n^X$), the change ratio $r_{i}^{X \to \Delta  X}$ equals $1$. The deviation of $r_{i}^{X \to \Delta  X}$ from 1 reflects the degree of change of noise distribution.

\subsubsection{Objective Function: NCE-II Loss}
Next, we design a novel contrastive loss function, leveraging the change ratio of the noise distribution.

\begin{definition}[\textbf{Incremental InfoNCE (NCE-II)}]\label{def:incremental_infonce}
Given the old data $X$ and new data $\Delta X$, the loss function of the incremental contrastive learning for old data is defined as
\begin{align}
    \mathcal{L}^{I I}_i 
    & =   \log  \left( \alpha  \cdot  r_{i}^{X \to \Delta  X}   +  (1 - \alpha)  \cdot  1  \right) \\
    & =   \log  \left( \alpha  \cdot  \frac{f(x_i,x_i^+)  +  K\mathbb{E}_{x_i^- \sim p_n^{\Delta  X}}  f(x_i,x_i^-)}{f(x_i,x_i^+)  +  K\mathbb{E}_{x_i^- \sim p_n^X}  f(x_i,x_i^-)}   +  (1 - \alpha)  \cdot  1  \right),\label{eq:incremental_infonce}
\end{align}
where constant $\alpha=\frac{\Delta N}{N + \Delta N} \in [0, 1)$ represents the growth ratio of the data.
\end{definition}

In NCE-II loss, the change ratio for each sample $r_{i}^{\Delta  X \to X}$ is weighted by a coefficient $\alpha$, which reflects the increment of the data. Specifically, when there is no new data (i.e., $\alpha = 0$) 
or the noise distribution remains the same (i.e., $r_{i,\Delta X \to X}$=1), the NCE-II equals $0$, which is in line with the intuition. 

\subsubsection{No Bias: Equivalence with Retraining} 
Given NCE-II, the encoder is trained on the old data $X$ with two contrastive loss functions (Eq.\eqref{eq:infonce_for_old} and Eq.\eqref{eq:incremental_infonce}), the sum of which is equivalent to the loss function of retraining on all data $X'$.

\begin{theorem}\label{th:equivalence}
For old data $x_i  \in  X$, the NCE-II with new data $\Delta X$ plus InfoNCE only with $X$ is equivalent to the one with all data $X'$, i.e., $\mathcal{L}_i^{I,X'} = \mathcal{L}_i^{I,X} + \mathcal{L}_i^{I I}$.
\end{theorem}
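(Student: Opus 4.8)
The statement is, at bottom, an algebraic identity among logarithms together with one probabilistic observation about how the noise distribution over the enlarged dataset decomposes. The plan is to first reduce the claimed equality of losses to a single identity among the three ``denominator-type'' quantities that appear inside the logs, and then verify that identity. Write $A \coloneqq f(x_i,x_i^+)$, $D_X \coloneqq A + K\mathbb{E}_{x_i^- \sim p_n^X} f(x_i,x_i^-)$, $D_{X'} \coloneqq A + K\mathbb{E}_{x_i^- \sim p_n^{X'}} f(x_i,x_i^-)$, and $N_{\Delta X} \coloneqq A + K\mathbb{E}_{x_i^- \sim p_n^{\Delta X}} f(x_i,x_i^-)$, so that $\mathcal{L}_i^{I,X} = -\log(A/D_X)$, $\mathcal{L}_i^{I,X'} = -\log(A/D_{X'})$, and $r_i^{X\to\Delta X} = N_{\Delta X}/D_X$. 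Expanding $\mathcal{L}_i^{I,X} + \mathcal{L}_i^{II}$ and merging the two logarithms, the factor $D_X$ telescopes away and one is left with $-\log\!\big(A / (\alpha N_{\Delta X} + (1-\alpha) D_X)\big)$; hence the theorem is equivalent to the single identity
\[
D_{X'} \;=\; \alpha\, N_{\Delta X} + (1-\alpha)\, D_X .
\]

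The second step is to prove this identity, which is where the real content sits. Since $A$ enters the right-hand side with total weight $\alpha + (1-\alpha) = 1$, the identity reduces further to showing that the noise expectation is a convex combination, $\mathbb{E}_{x_i^- \sim p_n^{X'}} f(x_i,x_i^-) = \alpha\,\mathbb{E}_{x_i^- \sim p_n^{\Delta X}} f(x_i,x_i^-) + (1-\alpha)\,\mathbb{E}_{x_i^- \sim p_n^{X}} f(x_i,x_i^-)$. I would establish this from the sampling scheme of Section~\ref{sec:background}: negatives for $X' = X \cup \Delta X$ are drawn uniformly from the pool of $N + \Delta N$ items, so a draw lands in $\Delta X$ with probability $\Delta N/(N+\Delta N) = \alpha$ (distributed within $\Delta X$ according to $p_n^{\Delta X}$) and in $X$ with probability $1-\alpha$ (according to $p_n^X$); thus $p_n^{X'} = \alpha\, p_n^{\Delta X} + (1-\alpha)\, p_n^{X}$ as a mixture, and linearity of expectation transfers this decomposition to $\mathbb{E}\, f(x_i,\cdot)$. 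This is the main obstacle: one must check that the mixture weights really are exactly $\alpha$ and $1-\alpha$ under the stated Monte-Carlo / mini-batch sampling — the treatment of the anchor $x_i$ itself, and whether $K$ is held fixed or implicitly scales with the dataset size, introduce off-by-one corrections of order $1/N$ — so this is precisely the point on which the ``unbiasedness'' claim genuinely rests, and it should be argued at the population level at which the definitions are written.

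Finally, I would assemble the pieces: substitute the convex-combination identity to conclude $D_{X'} = \alpha N_{\Delta X} + (1-\alpha) D_X$, and plug back to obtain $\mathcal{L}_i^{I,X} + \mathcal{L}_i^{II} = -\log(A/D_{X'}) = \mathcal{L}_i^{I,X'}$. As consistency checks I would note the degenerate cases already flagged after Definition~\ref{def:incremental_infonce}: when $\alpha = 0$ one has $D_{X'} = D_X$ and $\mathcal{L}_i^{II} = \log 1 = 0$; and when $r_i^{X\to\Delta X} = 1$ (noise distribution unchanged) again $\mathcal{L}_i^{II} = 0$ while $D_{X'} = D_X$; in both cases the identity correctly collapses to $\mathcal{L}_i^{I,X'} = \mathcal{L}_i^{I,X}$.
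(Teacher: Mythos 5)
Your proof is correct and follows essentially the same route the paper intends: merge the logarithms so that $D_X$ cancels, reduce the claim to $D_{X'}=\alpha N_{\Delta X}+(1-\alpha)D_X$, and obtain that from the mixture decomposition $p_n^{X'}=\alpha\,p_n^{\Delta X}+(1-\alpha)\,p_n^{X}$ with $\alpha=\Delta N/(N+\Delta N)$, which is exactly how Definitions~\ref{def:change_rate} and~\ref{def:incremental_infonce} are engineered to telescope. Your remark that the identity is exact only at the population level of the expectations (the finite-sum form in Definition~\ref{def:change_rate} carries an off-by-one from the anchor $x_i$) is a fair and worthwhile caveat that the paper's statement glosses over.
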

\begin{proof}
The proof is given in the online repositories\footnote{Proofs are at \url{https://github.com/RingBDStack/ICL-Incremental-InfoNCE}.}.
\end{proof}

Finally, the objective function of ICL is defined as
\begin{equation}\label{eq:objective}
    \mathcal{L} = \sum_{x_i \in X} \mathcal{L}_i^{I I} + \sum_{x_i \in \Delta X} \mathcal{L}_i^{I,X'},
\end{equation}
which is an unbiased estimation of both old data and new data w.r.t. retraining. In Table~\ref{tab:bias}, we provide the comparison of bias between different strategies to show the superiority of the proposed method.

\begin{table}[t]
    \centering
    \caption{Bias of different strategies.}
    \renewcommand\arraystretch{1.2}
    \begin{tabular}{ccc}
        \toprule
        & \textbf{Old Data} & \textbf{New Data} \\
        \midrule
        Inference           & $\log r_{i}^{X \to X'}$ & $\mathcal{L}_i^{I,X'}$ \\
        Fine-tuning           & $\log r_{i}^{X \to X'}$ & $\log r_{i}^{\Delta  X \to X'}$ \\
        Retraining          & 0 & 0 \\
        \textbf{ICL (Ours)} & 0 & 0 \\
        \bottomrule
    \end{tabular}
    \label{tab:bias}
\end{table}

\subsubsection{Bound Analysis.} 
Although the proposed loss function NCE-II is equal to the difference of InfoNCE with old data and all data, the final empirical risk $\mathcal{R}$ during the entire training process is nevertheless different due to the change of data distribution. Therefore, we provide the bound analysis to guarantee the correctness of the proposed NCE-II.

\begin{theorem}\label{th:bound}
The difference between the empirical risk of the method with the proposed NCE-II and the retraining with InfoNCE throughout the training process is bounded by $\alpha \mathcal{R}^{old}$, where $\mathcal{R}^{old}=\frac{1}{N}\sum_{i=1}^N \mathcal{L}_i^{I,old}$ approaches zero as $\mathcal{L}_i^{I,old}$ is minimized after the previous training process on old data $X$ and the growth ratio $\alpha \in [\,0,1)$. Then we have $\alpha \mathcal{R}^{old} \to 0$.
\end{theorem}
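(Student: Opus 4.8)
The plan is to reduce the claim to the pointwise identity of Theorem~\ref{th:equivalence} and then bound a single leftover term. Write $\theta$ for the encoder parameters and let $\theta^{old}$ be those reached after the previous training on $X$, so that $\mathcal{L}_i^{I,old}=\mathcal{L}_i^{I,X}(\theta^{old})$ and $\mathcal{R}^{old}=\tfrac1N\sum_{i=1}^N\mathcal{L}_i^{I,old}$. The key observation is that, by Theorem~\ref{th:equivalence}, the \emph{cumulative} contrastive signal the NCE-II method applies to an old sample $x_i\in X$ over the whole training — the previous-phase term $\mathcal{L}_i^{I,X}$ together with the incremental-phase term $\mathcal{L}_i^{I I}$ from Eq.~\eqref{eq:objective} — sums to exactly the term $\mathcal{L}_i^{I,X'}$ that retraining uses. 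Hence the only source of discrepancy in ``empirical risk throughout the training process'' is the \emph{ordering}: the previous phase optimized $\mathcal{L}_i^{I,X}$ under the old noise distribution before the correction $\mathcal{L}_i^{I I}=\mathcal{L}_i^{I,X'}-\mathcal{L}_i^{I,X}$ became available, so relative to retraining the target risk $\mathcal{R}$ is temporarily offset, on the old data, by the average correction $\tfrac1N\sum_{x_i\in X}\mathcal{L}_i^{I I}$. Everything reduces to bounding this average.

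The second step is to show $\tfrac1N\sum_{x_i\in X}\mathcal{L}_i^{I I}\le\alpha\,\mathcal{R}^{old}$. Using the form $\mathcal{L}_i^{I I}=\log\!\bigl(1+\alpha\,(r_i^{X\to\Delta X}-1)\bigr)$ from Eq.~\eqref{eq:incremental_infonce} together with the elementary bounds $t/(1+t)\le\log(1+t)\le t$ (valid for $t>-1$, and here $t=\alpha(r_i^{X\to\Delta X}-1)>-\alpha$), one gets $|\mathcal{L}_i^{I I}|\le c\,\alpha\,|r_i^{X\to\Delta X}-1|$ with $c$ a constant depending only on a fixed upper bound for $\alpha$; this already isolates the advertised factor $\alpha$. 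It then remains to bound $|r_i^{X\to\Delta X}-1|$ by $O(\mathcal{L}_i^{I,old})$: from Eq.~\eqref{eq:change_ratio}, $r_i^{X\to\Delta X}-1$ equals $K\bigl(\mathbb{E}_{p_n^{\Delta X}}f(x_i,x_i^-)-\mathbb{E}_{p_n^{X}}f(x_i,x_i^-)\bigr)$ divided by the old denominator $f(x_i,x_i^+)+K\mathbb{E}_{p_n^{X}}f(x_i,x_i^-)$; smallness of $\mathcal{L}_i^{I,X}$ at $\theta^{old}$ forces $f(x_i,x_i^+)$ to dominate $K\mathbb{E}_{p_n^{X}}f(x_i,x_i^-)$, and, using that the new-data noise expectation is comparable to the old one (consistent with the paper's modeling $p_n^{\Delta X}\!\approx\!p_n^{X}$), the numerator is $O\!\bigl(\mathcal{L}_i^{I,old}\cdot f(x_i,x_i^+)\bigr)$, so $|r_i^{X\to\Delta X}-1|=O(\mathcal{L}_i^{I,old})$. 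Averaging over $x_i\in X$ and absorbing constants gives $\tfrac1N\sum_{x_i\in X}\mathcal{L}_i^{I I}\le\alpha\,\mathcal{R}^{old}$.

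The third step is immediate: $\alpha=\Delta N/(N+\Delta N)\in[0,1)$ is a fixed constant strictly below $1$, while $\mathcal{R}^{old}=\tfrac1N\sum_i\mathcal{L}_i^{I,old}\to0$ because $\mathcal{L}_i^{I,old}$ was minimized by the previous training on $X$; hence $\alpha\,\mathcal{R}^{old}\to0$, so the empirical risk of the method with NCE-II and that of retraining with InfoNCE coincide in the limit, as claimed.

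I expect the second step to be the main obstacle, specifically controlling $|r_i^{X\to\Delta X}-1|$ by $\mathcal{L}_i^{I,old}$: the change ratio involves the new-data noise expectation $\mathbb{E}_{p_n^{\Delta X}}f(x_i,x_i^-)$, which is \emph{not} directly driven down by the previous training, so making the estimate rigorous needs a quantitative version of ``$p_n^{\Delta X}$ is close to $p_n^{X}$'' and, strictly, also a mild stability property ensuring the incremental updates do not inflate $\mathcal{L}_i^{I,X}(\theta)$ along the trajectory — this follows from $\mathcal{L}_i^{I,X}=\mathcal{L}_i^{I,X'}-\mathcal{L}_i^{I I}$ together with the fact that both pieces are kept controlled by the LRL-meta-optimized objective. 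A secondary, cosmetic point is pinning the constant at exactly $\alpha$ rather than $\alpha$ times a $(1-\alpha)^{-1}$-type factor, which is handled by the one-sided bound on $\mathcal{L}_i^{I I}$.
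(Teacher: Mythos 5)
Your overall strategy --- reduce to Theorem~\ref{th:equivalence} and then bound one leftover term --- starts in the right place, but you identify the wrong leftover term, and your second step does not go through. The discrepancy between the two training schedules is not $\tfrac1N\sum_{x_i\in X}\mathcal{L}_i^{II}$; it is the old-data InfoNCE term $\mathcal{L}_i^{I,X}$ counted under two different normalizations. Concretely, over the whole ICL process the old sample $x_i$ contributes $\mathcal{L}_i^{I,X}$ during the previous phase (averaged over the $N$ samples then present) and $\mathcal{L}_i^{II}$ during the incremental phase (averaged over $N+\Delta N$), whereas retraining contributes $\mathcal{L}_i^{I,X'}=\mathcal{L}_i^{I,X}+\mathcal{L}_i^{II}$ averaged over $N+\Delta N$. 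The $\mathcal{L}_i^{II}$ pieces cancel exactly by Theorem~\ref{th:equivalence}, and what remains is
\begin{equation*}
\Bigl(\tfrac{1}{N}-\tfrac{1}{N+\Delta N}\Bigr)\sum_{x_i\in X}\mathcal{L}_i^{I,X}
=\tfrac{\Delta N}{N+\Delta N}\cdot\tfrac{1}{N}\sum_{x_i\in X}\mathcal{L}_i^{I,X}
=\alpha\,\mathcal{R}^{old},
\end{equation*}
which is the claimed bound on the nose, with no estimation of $\mathcal{L}_i^{II}$ or of $r_i^{X\to\Delta X}$ required. This is a purely algebraic consequence of Theorem~\ref{th:equivalence} and the definition $\alpha=\Delta N/(N+\Delta N)$.

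Your attempted second step has two genuine problems beyond being unnecessary. First, the bound $|r_i^{X\to\Delta X}-1|=O(\mathcal{L}_i^{I,old})$ requires that $K\,\mathbb{E}_{x_i^-\sim p_n^{\Delta X}}f(x_i,x_i^-)$ be comparable to $K\,\mathbb{E}_{x_i^-\sim p_n^{X}}f(x_i,x_i^-)$, i.e.\ that the noise distribution essentially does not change; the previous training controls only the old-noise expectation, and assuming $p_n^{\Delta X}\approx p_n^{X}$ contradicts the premise of the whole construction (a theorem proved under that assumption would say nothing about the regime the paper cares about). You flag this yourself as the ``main obstacle,'' and it is in fact fatal to that route rather than a technicality. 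Second, even granting it, you only obtain $C\alpha\mathcal{R}^{old}$ for an unspecified constant $C$, and ``absorbing constants'' cannot recover the exact bound $\alpha\mathcal{R}^{old}$ stated in the theorem. Your third step (that $\alpha\mathcal{R}^{old}\to 0$ since $\alpha<1$ is fixed and $\mathcal{R}^{old}$ is driven to zero by the previous training) is fine and matches the intended conclusion.
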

\begin{proof}
The proof is given in the online repositories.
\end{proof}

\subsubsection{Complexity Analysis.} 
Given the number of epochs $\Delta T$ in the incremental learning process, the time complexity of the proposed method is $\mathcal{O}(\Delta T((2K + 1)N+(K + 1)\Delta N)C)$, where $C$ represents for the time consuming of the encoder and function $f(\cdot,\cdot)$. As the parameters of the encoder have been optimized, the number of epochs is much smaller than the one of the retraining process. We further guarantee it in the next section. 

\begin{figure*}[t]
    \centering
    \includegraphics[width=0.85\textwidth]{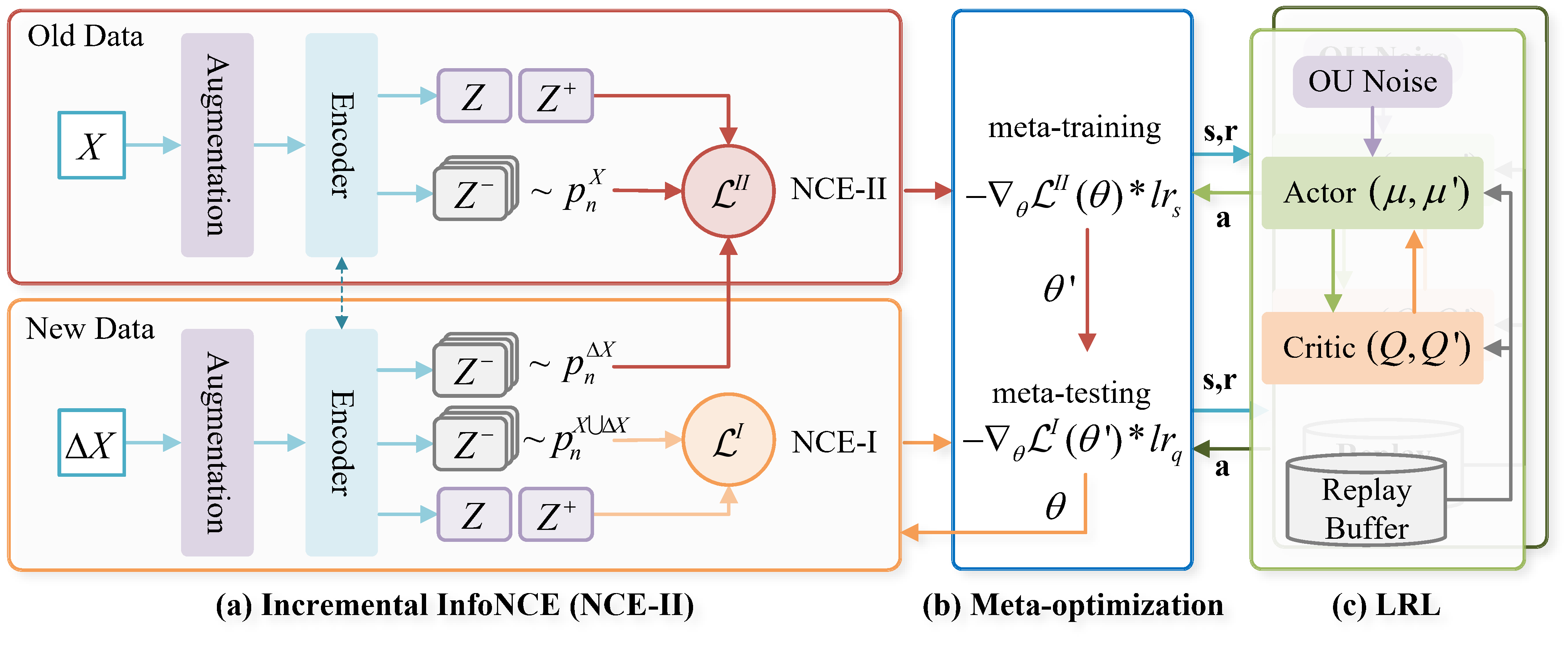}
    \caption{An overview of ICL. 
    (a) The proposed Incremental InfoNCE (NCE-II) to fit the change of noise distribution $p_n$. (b) The proposed meta-optimization where the old data $X$ is treated as the support set in the meta-training stage and the new data $\Delta X$ is treated as the query set in the meta-testing stage. (c) The proposed Learning Rate Learning (LRL) mechanism which uses the current loss as state and loss decrement as reward to generate the learning rates ($lr_s$ and $lr_q$) as actions for meta-optimization.
    }\label{fg:framework}
\end{figure*}

\subsection{Optimization: Efficient Adaption}\label{sec:optimization}
In order to further improve the efficiency of the proposed method and ensure that the number of epochs in the incremental learning process is smaller, we next propose a meta-optimization algorithm and a Learning Rate Learning (LRL) mechanism by reinforcement learning (RL) to quickly adapt to the new data.

Formally, the optimization of the encoder is defined as
\begin{equation}\label{eq:optimization}
    \theta_{t+1} \gets \theta_t - lr * \nabla_\theta \mathcal{L} (\phi(X;\theta_t)),
\end{equation}
where $lr$ is the learning rate and $ \nabla_\theta \mathcal{L} (\phi(X;\theta_t))$ is the gradient in the time step $t$.

\subsubsection{Meta-optimization}\label{sec:meta}
For plasticity, we treat the optimization of ICL for new data as transfer learning (the new data may contain new classes or from different domains) and propose a meta-optimization strategy for fast adaption, which is formulated as a form of model agnostic meta-learning (MAML)~\cite{finn2017model}.

We treat the old data $X$ as the support set and the new data $\Delta X$ as the query set. In the meta-training stage, we calculate the loss on the support set by Eq.\eqref{eq:incremental_infonce} and gain the new parameters $\theta'$ in a few gradient descent steps:
\begin{equation}\label{eq:update_parameters}
    \theta' = \theta - lr_s * \frac{\partial \sum_{x_i \in X} \mathcal{L}_\theta^{I I} (\phi(x_i;\theta))}{\partial \theta},
\end{equation}
where $lr_s$ is the learning rate of the meta-training process on the support set $X$. For fairness, the number of steps is set to $\max\{\lceil (1-\alpha)/\alpha \rceil, 1\}$ to ensure that the total number of samples trained in one epoch and the size of all data are as equal as possible, under the premise that there is at least one support sample for each query.

In the meta-testing stage, after obtaining the adapted parameters $\theta'$, we update the parameters $\theta$ on the query set with
\begin{equation}\label{eq:optimize_parameters}
    \theta \gets \theta - lr_q * \frac{\partial \sum_{x_i \in \Delta X} \mathcal{L}_\theta^{I,X'} (\phi(x_i;\theta'))}{\partial \theta},
\end{equation}
where $lr_q$ is the learning rate of the meta-testing process on the query set $\Delta X$.



\subsubsection{Learning Rate Learning (LRL)}\label{sec:lrl}
The learning rate for gradient descent is commonly set as a hyperparameter and needs to be manually searched for the optimal value. However, it is a challenge to determine the values of two learning rates ($lr_q$ and $lr_s$) in our proposed meta-optimization. It is (a) unreasonable to simply set the two values to the same or use the previous one because the two learning rates are used for different training processes and data, (b) hard to search for the optimal value due to the large searching space, and (c) the change rules of learning rates should be automatically learned according to the status of the training environment instead of applying the human-designed ones. Therefore, we aim to find a solution that can adaptively control learning rates while speeding up the convergence process.

We proposed a Learning rate Learning (LRL) mechanism to learn the two learning rates ($lr_s$ and $lr_q$) adaptively according to the current state of the encoder in a reinforcement learning approach~\cite{kaelbling1996reinforcement,xu2017reinforcement,li2021reinforcement}. 
In our reinforcement learning setting, we propose to use a function $\mathcal{F}$ to generate the state $s_t$ which encodes the observation of the training process (the inputs $X$ and the parameters $\theta_t$) at the time step $t$:
\begin{equation}
    s_t = \mathcal{F}(X, \theta_t).
\end{equation}
The action $a_t$ is defined as the learned learning rate based on the state $s_t$ and $a_t \in \mathbb{R}$ is a continuous value. 

To resolve the continuous action space issue, \cite{xu2017reinforcement} uses an actor-critic algorithm~\cite{sutton1999policy,silver2014deterministic} to generate the learning rate, employing two neural networks: the actor network for learning rate generation and the critic network for criticizing the actions. However, because the inputs of the actor network are ordered and related (i.e., not i.i.d.), the reinforcement learning process is therefore unstable and problematic~\cite{mnih2013playing}. Therefore, we propose to use the Deep Deterministic Policy Gradient (DDPG) to learn the two learning rates respectively, which contains the experience replay mechanism to resolve the i.i.d. issue and the separate target network mechanism for a stable training~\cite{lillicrap2016continuous}. 

In general, DDPG networks $\mathcal{D}=\{\mu, Q, \mu', Q'\}$ consist of the (online) actor network $\mu(s|\theta^\mu)$ and (online) critic network $Q(s,a|\theta^Q)$, and the target actor network $\mu'$ and the target critic network $Q'$ with the same initial weights as the online ones. We first select an action $a_t$ in step $t$ as
\begin{equation}\label{eq:generate_learning_rate}
    a_t = \mu(s_t|\theta^\mu) + \mathcal{N}_t,
\end{equation}
where $\mathcal{N}$ is the Ornstein-Uhlenbeck (OU) process~\cite{uhlenbeck1930theory} to generate noise for action exploration. Next, the action $a_t$ is executed (updating encoder's parameters with $a_t$ as the learning rate) and the new state $s_{t+1}$ and reward $r_t$ are observed. As the aim of LRL is to accelerate the convergence, we define the reward $r_t$ as the loss decrement:
\begin{equation}\label{eq:reward}
    r_t = \mathcal{L}_t - \mathcal{L}_{t+1}.
\end{equation}
Finally, we optimize the the critic network using Temporal-Difference (TD) learning with a replay buffer and actor network with the chain rule~\cite{lillicrap2016continuous}. The target networks are updated by a momentum mechanism. Please refer to the online repositories for more details.



\section{Experiment}\label{sec:exp}
Extensive experiments across two domains, computer vision and graph representation learning, are conducted to demonstrate the high efficiency of the proposed unbiased ICL\footnote{Code is available at \url{https://github.com/RingBDStack/ICL-Incremental-InfoNCE}.} framework. In addition, ICL also achieves competitive results. We further provide the incremental setting analysis and ablation study.

\begin{table*}[t]
    \centering
    \caption{Classification results of running time and convergence epoch on CV datasets. (\textbf{bold}: best; \underline{underlined}: runner-up)}
    \resizebox{\textwidth}{!}{\begin{tabular}{lrrrrrr|rrrrrr|cc}
        \toprule
        \textbf{Dataset} &  
        \multicolumn{6}{c}{\textbf{ImageNet-2}} & 
        \multicolumn{6}{c}{\textbf{MNIST-2}} & 
        \multicolumn{2}{c}{\multirow{2}{*}{\textbf{Avg. Rank}}} \\
        \textbf{Growth Rate} &
        \multicolumn{2}{c}{\textbf{$\alpha=0.3$}} & 
        \multicolumn{2}{c}{\textbf{$\alpha=0.5$}} & 
        \multicolumn{2}{c}{\textbf{$\alpha=0.7$}} &
        \multicolumn{2}{c}{\textbf{$\alpha=0.3$}} & 
        \multicolumn{2}{c}{\textbf{$\alpha=0.5$}} & 
        \multicolumn{2}{c}{\textbf{$\alpha=0.7$}} \\
        \cmidrule(r){2-15}
        \textbf{Metric} & Time & Epoch  & Time & Epoch  & Time & Epoch  & Time & Epoch  & Time & Epoch  & Time & Epoch & Time & Epoch \\
        \midrule
        Retraining & 1.0x & 1.0x  & 1.0x & 1.0x  & 1.0x & 1.0x  & 1.0x & 1.0x  & 1.0x & 1.0x  & 1.0x & 1.0x & - & -  \\
        Fine-tuning & \underline{10.7x} & 3.2x  & 3.9x & 2.0x  & 2.2x & 1.6x  & \underline{12.3x} & \underline{3.6x}  & \underline{5.5x} & 2.6x  & \underline{3.8x} & \underline{2.7x} & \underline{2.3} & \underline{3.6} \\
        Replay & 3.4x & 2.0x & 2.0x & 1.6x & \underline{3.4x} & 2.7x & 2.8x & 2.6x & 3.5x & 2.5x & 3.2x & 2.4x & 4.3 & 4.5\\
        Distillation & 6.9x & 2.9x & 2.1x & 1.4x & 2.2x & 1.5x & 7.8x & 3.0x & 4.4x & \underline{2.8x} & 3.1x & 2.2x & 4.0 & 4.8\\
        \midrule
        \textbf{ICL (Ours)}  & \textbf{11.2x} & \textbf{16.8x}  & \textbf{6.2x} & \textbf{13.2x} & \textbf{5.9x} & \textbf{12.7x}  & \textbf{14.2x} & \textbf{10.6x}  & \textbf{16.7x} & \textbf{12.7x}  & \textbf{6.4x} & \textbf{7.1x}  & \textbf{1.0} & \textbf{1.0}\\
        ICL (w/o LRL)        & 9.6x  & 14.7x  & 3.4x & 5.3x  & 1.1x & 2.2x   & 1.0x & 1.0x  & 2.4x & 2.5x  & 1.5x & 2.3x & 5.2 & 4.0 \\
        ICL (w/o M)          & 9.2x  & 15.7x  & \underline{4.8x} & \underline{8.1x} & 1.7x & \underline{3.8x}   & 1.1x & 1.3x  & 1.2x & 1.5x  & 1.0x & 1.2x & 5.3 & 4.5 \\
        ICL (w/o M+LRL)      & 9.8x  & \underline{16.4x}  & 1.6x & 2.7x  & 1.0x & 1.3x   & 1.7x & 1.7x  & 1.3x & 1.3x  & 1.2x & 1.3x & 5.7 & 5.3 \\
        \bottomrule
    \end{tabular}}\label{tab:classification_cv_time}
    
\end{table*}

\begin{table*}[t]
    \centering
    \caption{Classification results of running time and convergence epoch on GRL datasets. (\textbf{bold}: best; \underline{underlined}: runner-up)}
    \resizebox{\textwidth}{!}{\begin{tabular}{lrrrrrr|rrrrrr|cc}
        \toprule
        \textbf{Dataset} &  
        \multicolumn{6}{c}{\textbf{PROTEINS}} & 
        \multicolumn{6}{c}{\textbf{REDDIT}} & 
        \multicolumn{2}{c}{\multirow{2}{*}{\textbf{Avg. Rank}}} \\
        \textbf{Growth Rate} &
        \multicolumn{2}{c}{\textbf{$\alpha=0.3$}} & 
        \multicolumn{2}{c}{\textbf{$\alpha=0.5$}} & 
        \multicolumn{2}{c}{\textbf{$\alpha=0.7$}} &
        \multicolumn{2}{c}{\textbf{$\alpha=0.3$}} & 
        \multicolumn{2}{c}{\textbf{$\alpha=0.5$}} & 
        \multicolumn{2}{c}{\textbf{$\alpha=0.7$}} \\
        \cmidrule(r){2-15}
        \textbf{Metric} & Time & Epoch  & Time & Epoch  & Time & Epoch  & Time & Epoch  & Time & Epoch  & Time & Epoch & Time & Epoch  \\
        \midrule
        Retraining & 1.0x & 1.0x  & 1.0x & 1.0x  & 1.0x & 1.0x  & 1.0x & 1.0x  & 1.0x & 1.0x  & 1.0x & 1.0x & - & -  \\
        Fine-tuning & \textbf{25.7x} & 8.0x & \textbf{8.1x} & \underline{3.9x}  & \textbf{2.7x} & 1.9x  & \textbf{14.4x} & 1.1x  & \textbf{5.5x} & \underline{2.9x}  & 1.2x & 0.9x & \textbf{1.7} & 3.8 \\
        Replay & 2.9x & 2.1x & 1.6x & 1.0x & 1.4x & 1.4x & 2.9x & 2.1x & 2.0x & 1.4x & 1.3x & 1.1x & 4.8 & 5.8\\
        Distillation & \underline{18.2x} & 7.3x & 5.2x & 2.7x & 0.9x & 0.8x & 2.4x & 1.5x & 2.7x & 1.6x & 1.1x & 0.9x & 4.8 & 5.3 \\
        \midrule
        \textbf{ICL (Ours)}  & 10.1x & \underline{10.5x} & \underline{5.8x} & \textbf{6.1x}  & \underline{2.6x} & \textbf{2.7x}  & 2.7x & 2.5x  & \underline{2.9x} & \textbf{3.1x}  & 2.3x & \underline{3.2x} & \underline{3.0} & \textbf{1.8} \\
        ICL (w/o LRL)        & 3.8x & \textbf{14.6x} & 1.6x & 1.7x  & 1.0x & 1.1x  & 3.4x & \underline{3.3x}  & 2.7x & \underline{2.9x}  & \textbf{2.6x} & \textbf{3.6x} & 3.8 & \underline{3.0} \\
        ICL (w/o M)          & 2.6x  & 3.6x  & 2.1x & 2.9x  & 1.4x & \underline{2.1x}  & 3.0x & 3.2x  & 1.4x & 1.6x  & 1.2x & 1.3x & 5.0 & 3.7  \\
        ICL (w/o M+LRL)      & 2.5x  & 2.9x  & 1.9x & 2.0x  & 1.1x & 1.2x  & \underline{5.9x} & \textbf{5.8x}  & 2.2x & 2.4x  & \underline{2.5x} & 2.5x & 4.1 & 4.0 \\
        \bottomrule
    \end{tabular}}\label{tab:classification_grl_time}
\end{table*}

\subsection{Experimental Settings}
\subsubsection{Datasets}
We use four benchmark datasets across computer vision (CV) and graph representation learning (GRL). 
For CV, we use the following two datasets: (a) ImageNet is a benchmark dataset consisting of around 1.28 million images~\cite{deng2009imagenet}. In order to reduce the influence of additional factors for an accurate efficiency evaluation, we extract a subset with 2 classes, named ImageNet-2, to manage that the entire training process can be completed on 1 GPU within 6 hours. (b) MNIST is a handwritten digit dataset~\cite{lecun1998gradient}. Similarly, we extract a subset with 2 classes, named MNIST-2.
For GRL, we use the following two datasets: (a) PROTEINS is a bioinformatics dataset with 1,113 molecular graphs~\cite{morris2020tudataset}. (b) REDDIT is a social network dataset consisting of 2,000 graphs in 2 classes~\cite{morris2020tudataset}.
We use Local Degree Profile (LDP) algorithm~\cite{cai2018simple} to generate node features.

\subsubsection{Baselines and ICL Variants}
We compare four baselines including the simple strategies (retraining and fine-tuning) and commonly used techniques (replay and distillation). \begin{itemize}
    \item \textbf{Retraining} uses both old data and new data to train a new encoder. Since the aim of this study is to find an approach for unbiased estimation, we first compare ICL with retraining which serves as the baseline of efficiency and the upper bound of effectiveness. 
    \item \textbf{Fine-tuning} updates the parameters of the pre-trained encoder using the new data. 
\end{itemize}
Furthermore, since ICL is under the self-supervised setting which lacks related research, we survey the replay and distillation technique used in incremental learning~\cite{rebuffi2017icarl,castro2018end,fang2020seed,cha2021co,lin2021continual}.
\begin{itemize}
    \item \textbf{Replay} uses the previously seen data (partially stored old data) and the new data to train the encoder for avoiding catastrophic forgetting.
classes, 
    \item \textbf{Distillation} learns a more compact encoder from the old one to prevent over-drift of representations from previous data when learning new ones.  
\end{itemize}
For fairness, we use the exact same network architecture for all methods. 

We further introduce the following three variants of ICL to verify the effectiveness of the components (LRL and meta-optimization):
\begin{itemize}
    \item ICL without LRL mechanism (w/o LRL), i.e., with NCE-II and meta-optimization.
    \item ICL without meta-optimization (w/o M), i.e., with NCE-II and LRL only generating one learning rate.
    \item ICL without meta-optimization and LRL mechanism (w/o M+LRL), i.e., only with NCE-II.
\end{itemize}

\subsubsection{Implementation Details}\label{exp_deatils}

We adopt a two-stage scheme~\cite{you2020graph,li2021rwne,li2022aiqoser}. In the training process, for fairness, all of the methods use the same single encoder to generate the representations. In the testing process, an extra SVM classifier~\cite{you2020graph} is trained with the fixed embedding for evaluation. In order to evaluate the efficiency of the methods more accurately and reduce the influence of external factors, we conduct each experiment on one single Nvidia V100 32G GPU for 5 times independently. We record the mean value as the final accuracy for each experiment and omit the standard deviations (all deviations are around 0.01).
(a) The encoder we have used for CV is ResNet-18~\cite{he2016deep} and a two-layer Graph Convolution Network (GCN)~\cite{kipf2016semi} with 32 hidden units is used for GRL. 
(b) We choose \{random resizing, 224×224-pixel cropping, random color jittering, random grayscale conversion, random horizontal flip\}~\cite{wu2018unsupervised,he2020momentum} as the data augmentation for images, and randomly sample one from \{random node dropping, random node attribute masking, random subgraph selection\}~\cite{you2020graph} for GRL. 
(c) The similarity measurement we have used is the cosine similarity function $sim(z_i,z_j) \coloneqq z_i \cdot z_j / ||z_i|| \cdot ||z_j||$. (d) The state generated by function $\mathcal{F}$ in LRL is defined as the average loss of the current training process on the mini-batch data. 
(e) A two-layer LSTM with 20 hidden units is used as the actor network and a three-layer neural network (NN) with 10 hidden units is used as the critic network.

\subsubsection{Parameters Setting} 
For common hyper-parameters, 
the number of negative samples $K=31$ (i.e., a batch size $bs=32$), 
the initial learning rate is searched from $lr \in \{10^{-3}, 10^{-4}, 10^{-5}\}$, 
the temperature parameters $\tau=0.1$, 
and the patience to wait for convergence is $50$ epochs (i.e., the process is terminated when the loss no longer drops for 50 epochs). 
For LRL, the momentum term $m=10^{-3}$.


\subsection{Efficiency Analysis: How fast ICL is?}
We first perform the image and graph-level classification on CV and GRL datasets.
To simulate different incremental scenarios, we randomly split each dataset into the old data and new data according to the given growth ratio $\alpha$.
As an unbiased approach, the proposed ICL framework achieves high efficiency and fast convergence. 
Specifically, we have the following observations.

\textbf{High Efficiency.} 
The proposed ICL framework has a great superiority in terms of training time consumption as shown in Table~\ref{tab:classification_cv_time} and Table~\ref{tab:classification_grl_time}, where we use the speedup of training time compared with retraining $s_{e,i}=time_{retrain}/time_i$ for method $i$ as the metric. 
(a) For CV datasets, ICL achieves a speedup of up to $16.7\times$ w.r.t. retraining. 
Moreover, without the meta-optimization and LRL (i.e., only applying NCE-II), ICL still brings us a speedup of up to $9.8\times$.
Overall, it is obvious that ICL is the most efficient approach in all of the cases on CV datasets, even compared with fine-tuning and distillation which exclude the need of training old data. 
(b) For GRL datasets, ICL also achieves a speedup of up to $10.1\times$, and still $5.9\times$ without the meta-optimization and LRL. 
It is worth mentioning that ICL can beat replay and distillation with only NCE-II.

\textbf{Fast Convergence.} 
The proposed ICL framework gives the fastest convergence speed. We use the speedup of convergence epoch compared with the retraining $s_{c,i}=epoch_{retrain}/epoch_i$ for method $i$ as its metric. 
(a) For CV datasets, ICL achieves a speedup of up to $16.8\times$ compared with retraining and the improvements are significant in all cases. Moreover, ICL only with NCE-II (w/o M+LRL) still gains a speedup of up to $16.4\times$.
Similar to training time, the convergence speedup of ICL exceeds all methods.
(b) For GRL datasets, the experimental results are identical. ICL with its variants achieves a speedup of up to $14.6\times$. 

Therefore, the proposed ICL maintains significant advantages in reducing training time and accelerating convergence. Furthermore, although the time complexity of ICL is larger,
the results still show the superiority of ICL in terms of efficiency, 
which contributed to the LRL mechanism. 

\subsection{Effectiveness Analysis: Does ICL impede model?} 
As an unbiased and efficient approach, the proposed ICL also achieves competitive performance as shown in Figure~\ref{fig:acc_old} and Figure~\ref{fig:acc_new}. 
It is clearly observed that applying ICL to a contrastive learning approach will not impede the representation ability of the model. Specifically, the difference between ICL and the others falls in $[-0.0137, +0.0126]$ with an average improvement of 0.002.

\subsection{Incremental Setting Analysis: When to use ICL?}
We study the change of speedup of training time and convergence epoch with the variation of the growth ratio $\alpha$. For fairness, we compare the unbiased methods (ICL and retraining) and report the results in Figure~\ref{fig:alpha}. 
Specifically, we have the following findings.

\textbf{Consistent Efficiency.} We vary $\alpha$ from $0.1$ to $0.9$ at $0.1$ intervals, i.e., simulating the amount of new data from 1/9 of the old data to 9 times the old data. Overall, ICL achieves a $2.2\times$-$18.3\times$ speedup of training time and a $2.7\times$-$22.9\times$ speedup of convergence epoch.

\textbf{Adaption Ability with a Large Ratio of New Data.} The superiority of ICL in training time is obvious even when $\alpha$ is large. Moreover, from Table~\ref{tab:classification_cv_time} and Table~\ref{tab:classification_grl_time}, ICL consistently achieves faster convergence while the other baselines nearly equal the retraining. Thus, ICL is more practical in scenarios where there is a large ratio of new data.

\textbf{Limited Superiority by the Ratio of New Data.} As $\alpha$ increases, the speedup of ICL becomes smaller due to a large amount of new data. Thus, overmuch new data leads to the weakening of incremental learning strategies. However, ICL nevertheless gains a $2.2\times$-$5.3\times$ speedup when the amount of the new data is 9 times the old one.

\subsection{Ablation Study: How ICL works?}
We further investigate the ablation study of the three important components: the NCE-II loss, the meta-optimization algorithm, and the LRL mechanism. 
We provide the loss change compared with NCE-I and Adam in Figure~\ref{fig:training_loss}.

\textbf{NCE-II Loss is unbiased and useful.} From Table~\ref{tab:classification_cv_time} and Table~\ref{tab:classification_grl_time}, it is noticed that ICL only with NCE-II (i.e., without meta-optimization and LRL) still achieves a faster training and convergence speed, up to $9.8\times$ and $16.4\times$ respectively. From Figure~\ref{fig:training_loss}, NCE-II's decrease rate of loss is faster than only applying NCE-I, which reflects that NCE-II helps to quickly adapt to new data with an unbiased estimation.

\textbf{Meta-optimization is essential.} As shown in  Table~\ref{tab:classification_cv_time} and Table~\ref{tab:classification_grl_time}, without meta-optimization, the efficiency is significantly lowered. Moreover, comparing ICL only with NCE-II and ICL with meta-optimization, the efficiency is improved. Thus, meta-optimization contributes considerably to fast adaption of new data.

\textbf{Learning Rate Learning has a vital contribution to the improvement of efficiency.} From Table~\ref{tab:classification_cv_time} and Table~\ref{tab:classification_grl_time}, without LRL mechanism, the degree of improvement on efficiency is reduced. In Figure~\ref{fig:training_loss}, it is obvious that the loss declines faster with LRL compared with the Adam algorithm.

\begin{figure*}[t]
	\centering
	\subfigure[ImageNet-2.]{
		\includegraphics[width=0.22\textwidth]{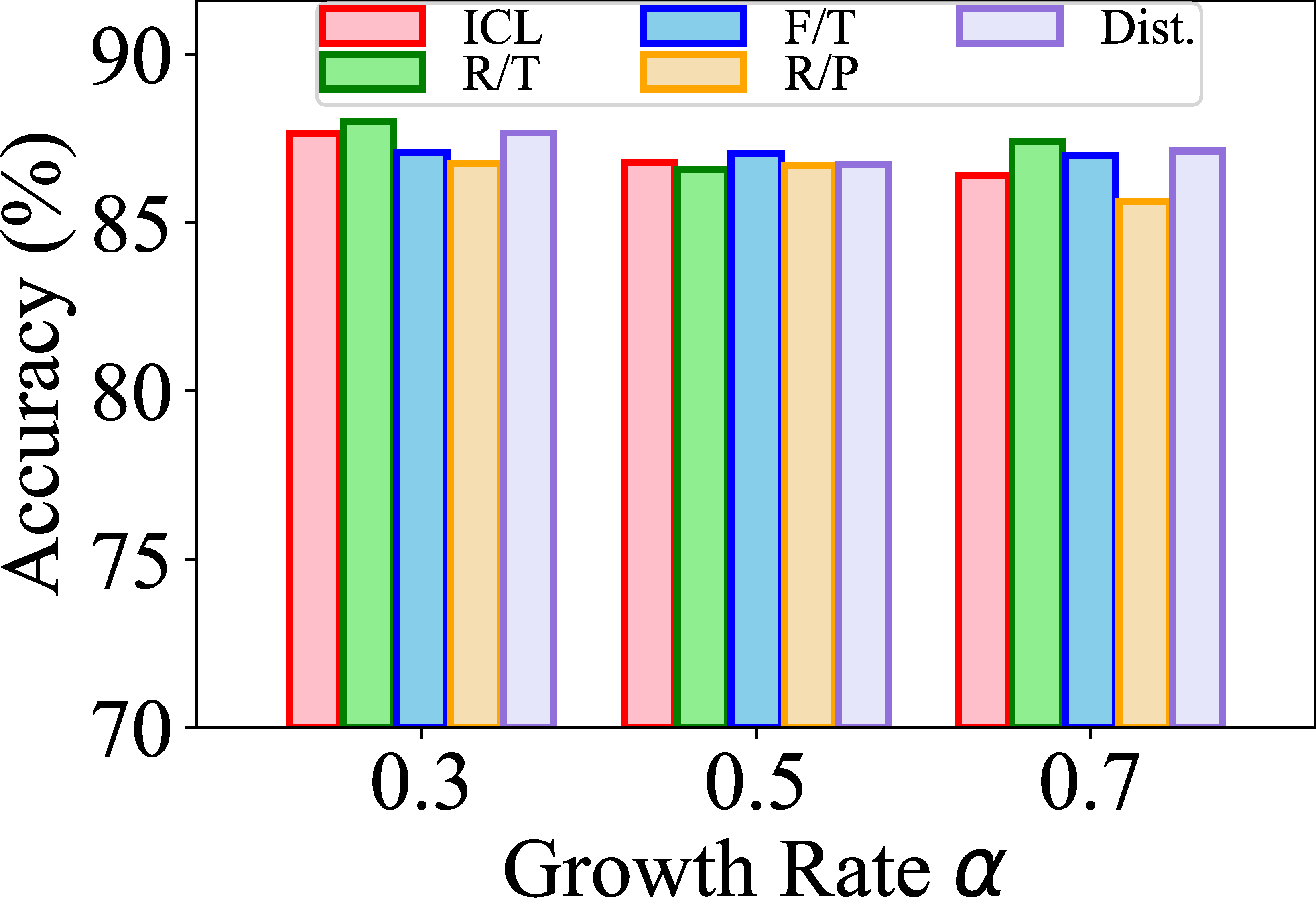}
	}
	\subfigure[MNIST-2.]{
		\includegraphics[width=0.225\textwidth]{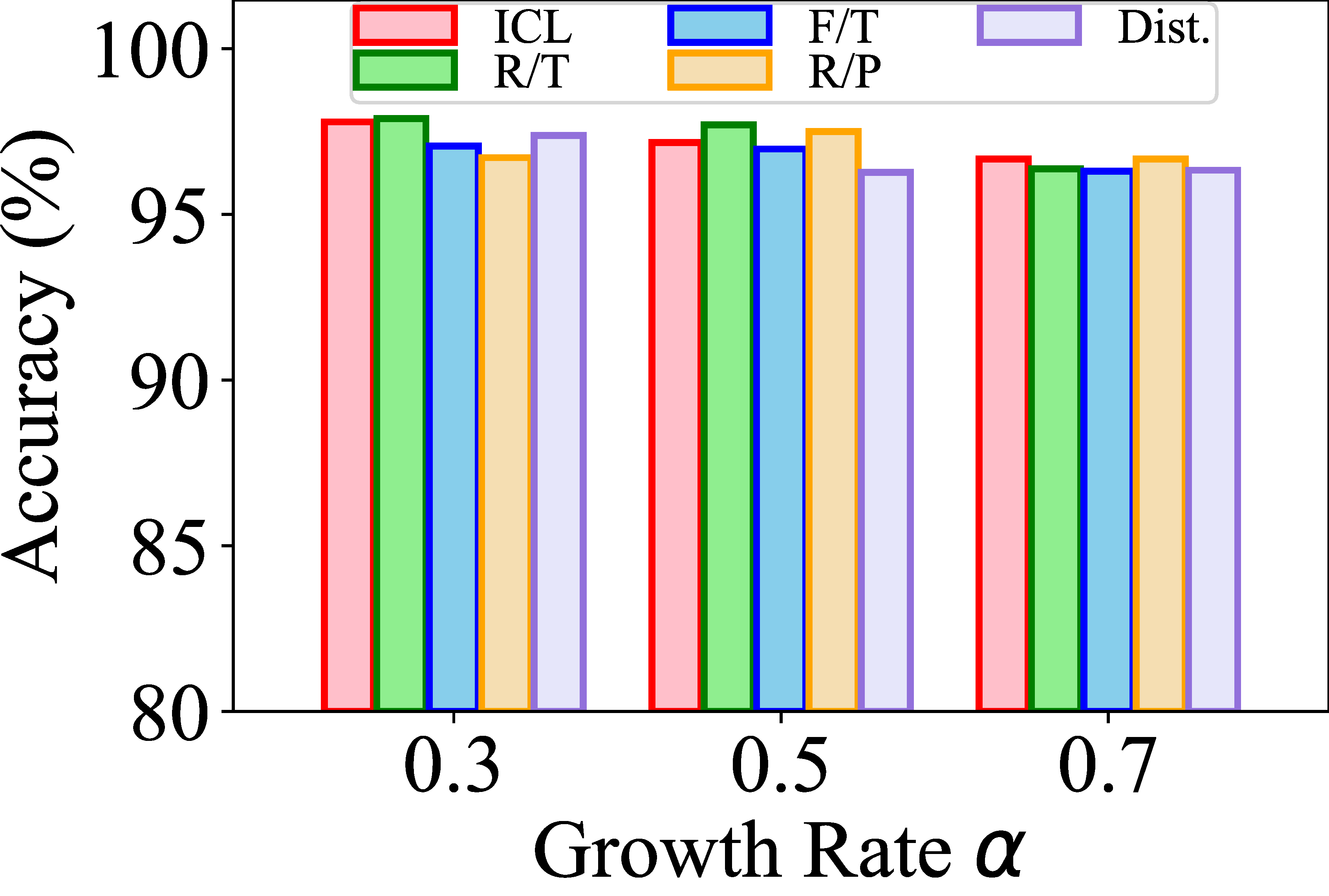}
	}
	\subfigure[PROTEINS.]{
		\includegraphics[width=0.22\textwidth]{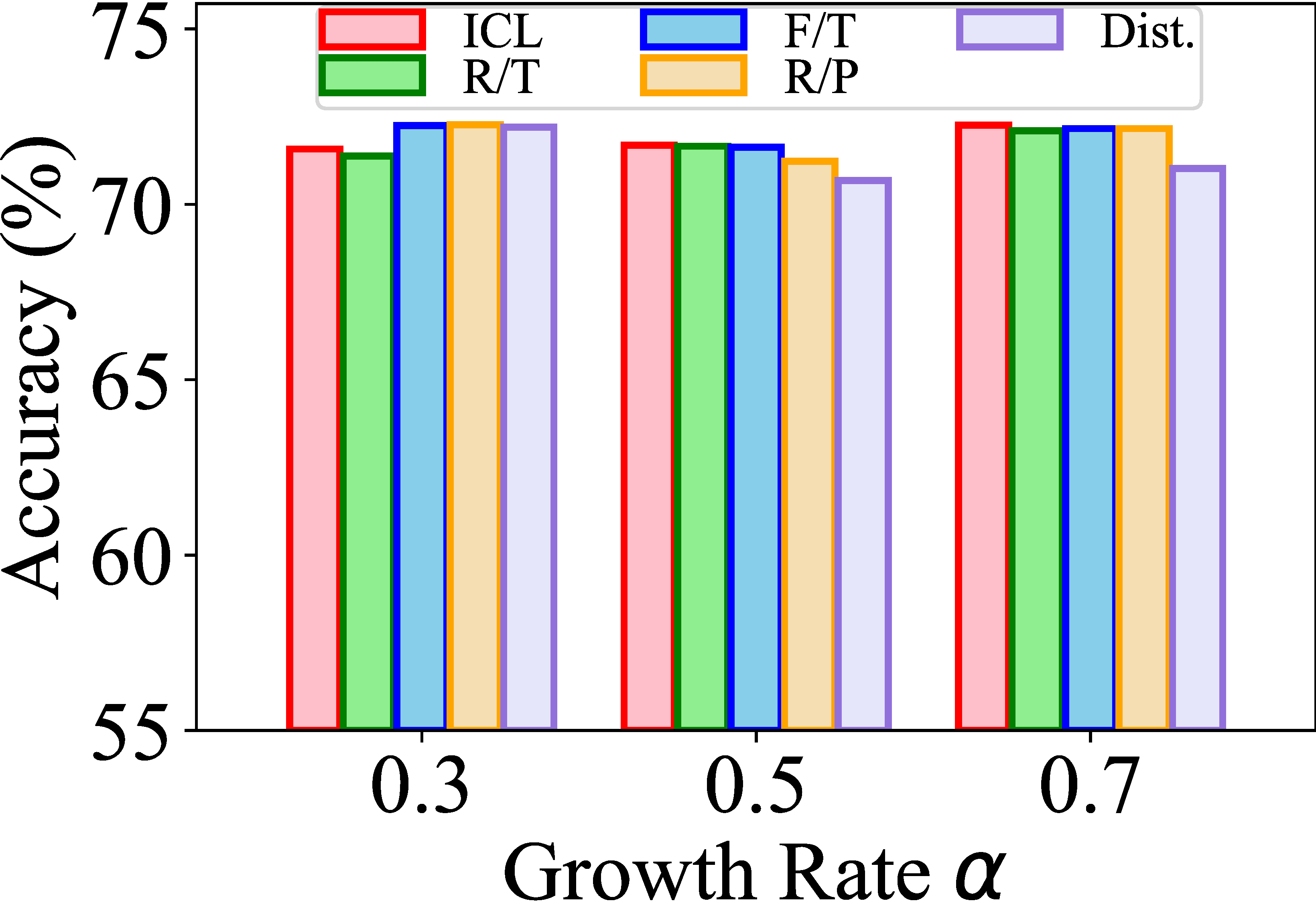}
	}
	\subfigure[REDDIT.]{
		\includegraphics[width=0.22\textwidth]{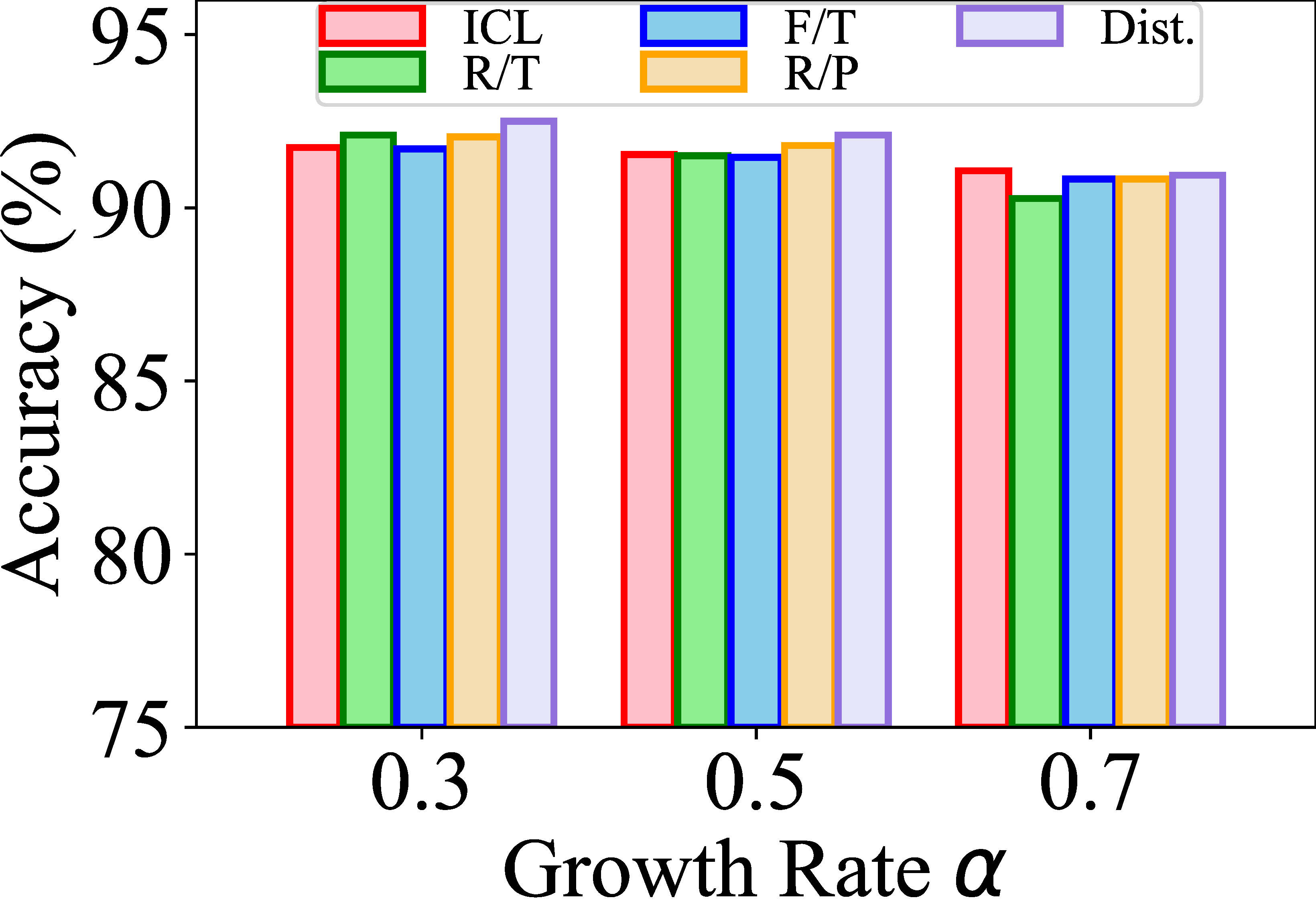}
	}
	\caption{Classification results of old data on four datasets. (R/T: Retraining; F/T: Fine-tuning; R/P: Replay; Dist.: Distillation)}\label{fig:acc_old}
\end{figure*}

\begin{figure*}[t]
	\centering
	\subfigure[ImageNet-2.]{
		\includegraphics[width=0.22\textwidth]{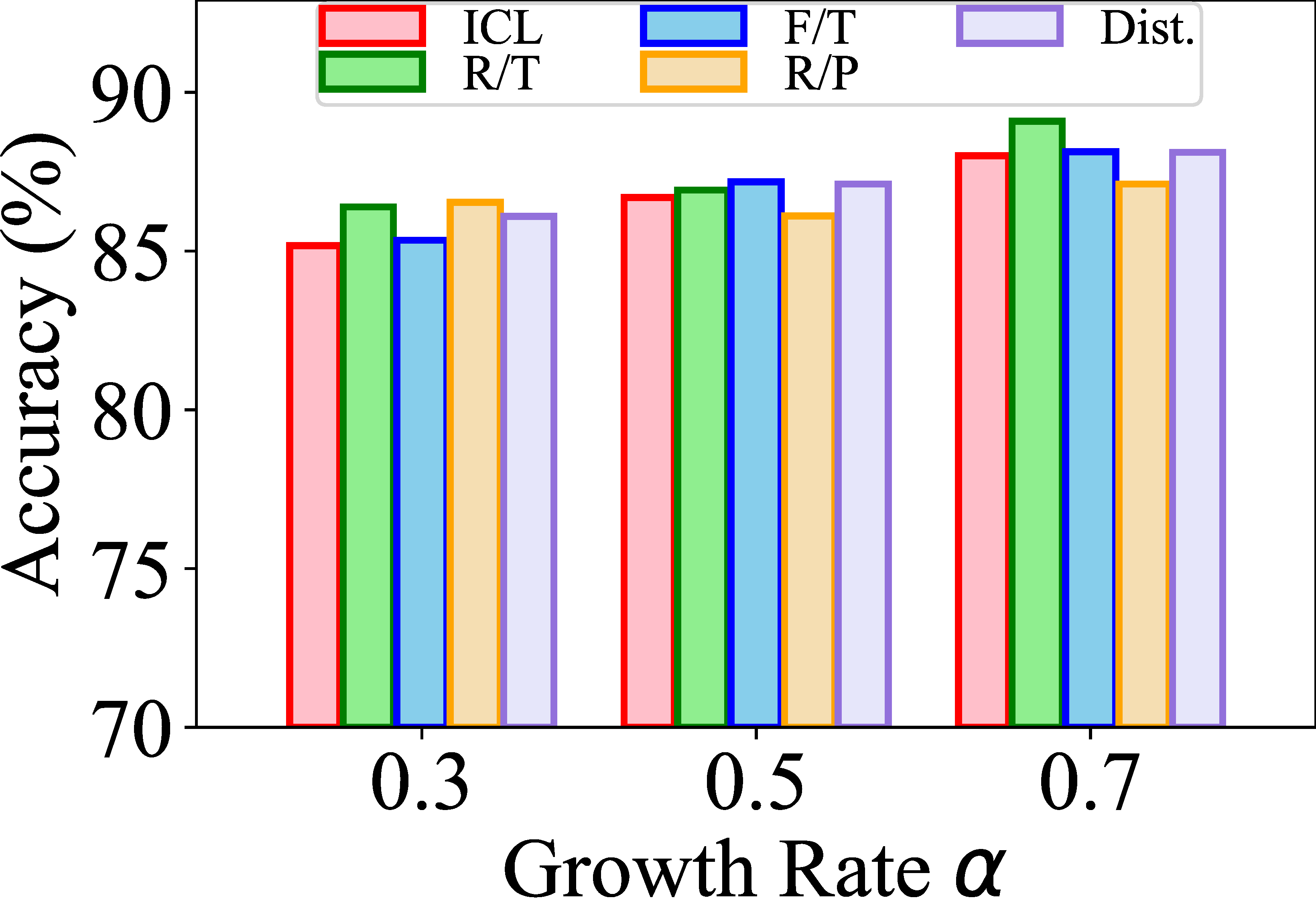}
	}
	\subfigure[MNIST-2.]{
		\includegraphics[width=0.225\textwidth]{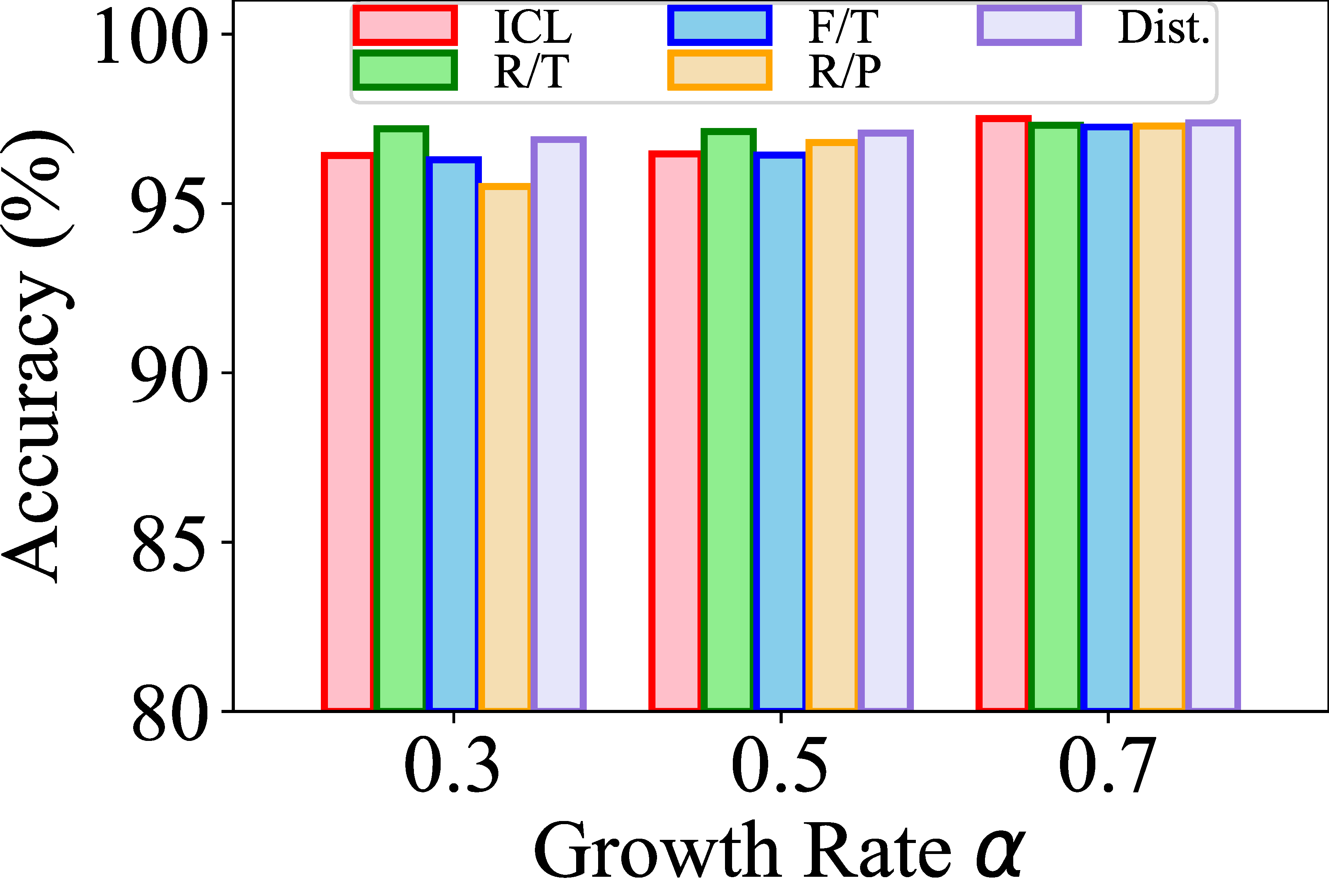}
	}
	\subfigure[PROTEINS.]{
		\includegraphics[width=0.22\textwidth]{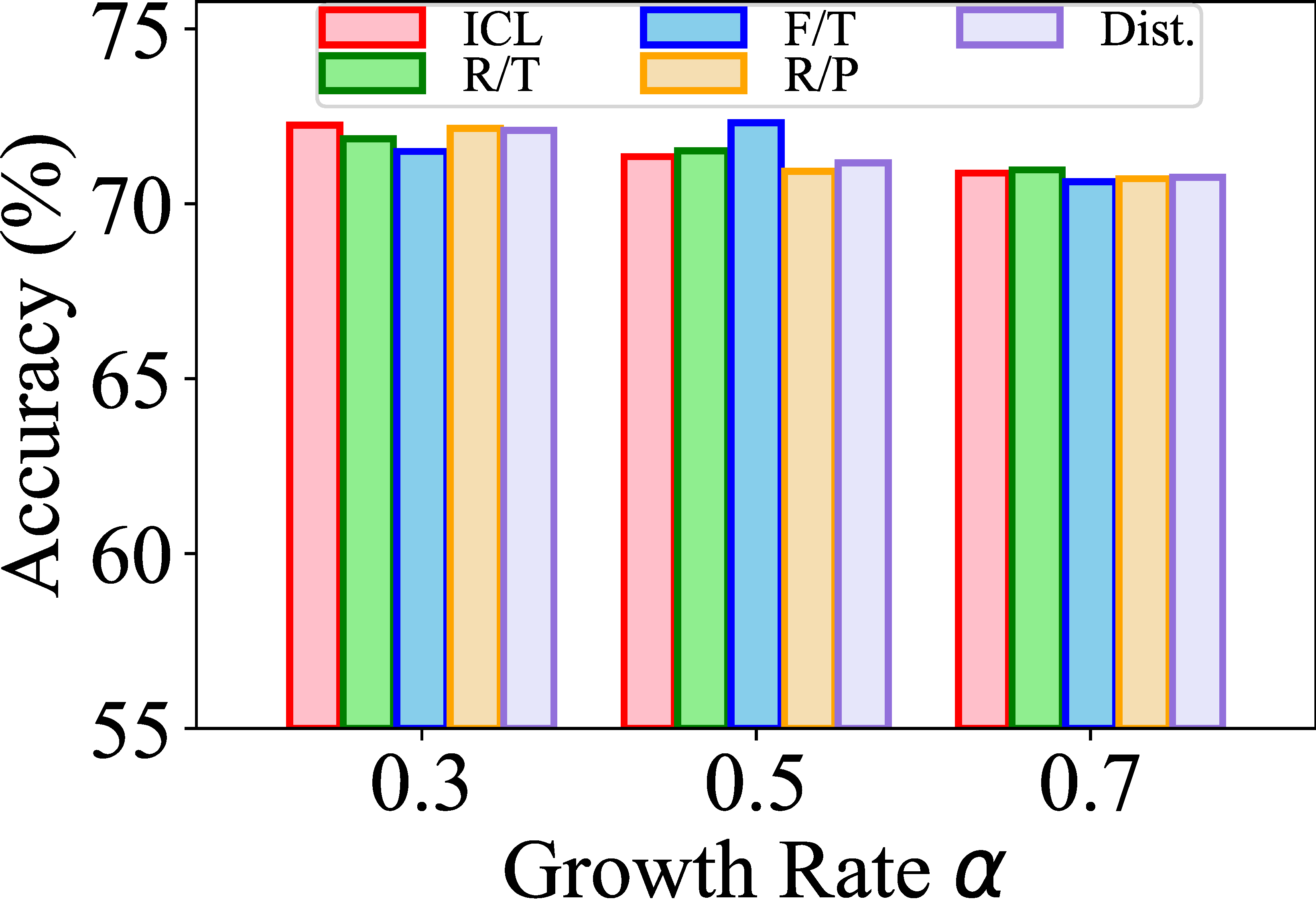}
	}
	\subfigure[REDDIT.]{
		\includegraphics[width=0.22\textwidth]{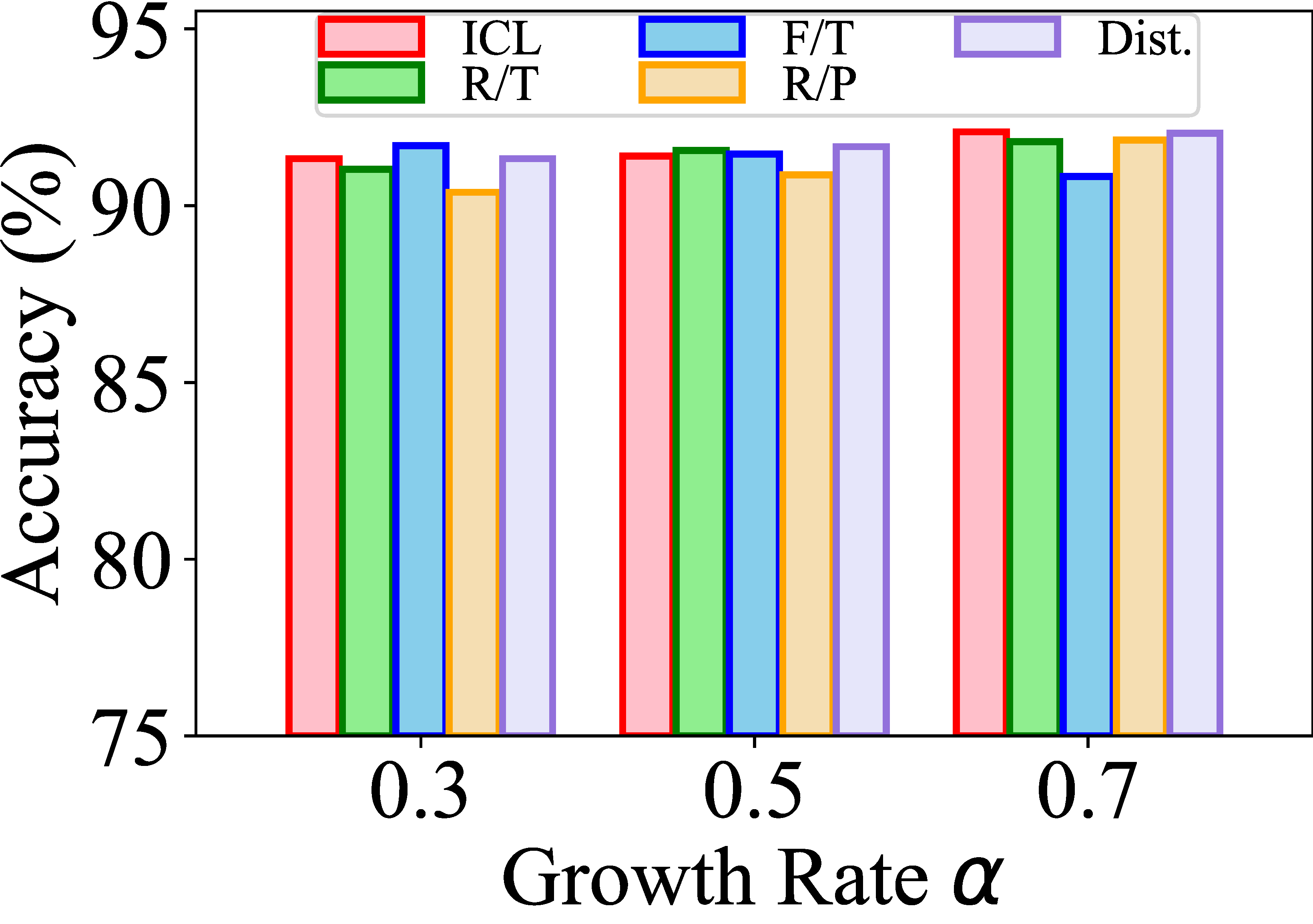}
	}
	\caption{Classification results of new data on four datasets. (R/T: Retraining; F/T: Fine-tuning; R/P: Replay; Dist.: Distillation)}\label{fig:acc_new}
\end{figure*}

\begin{figure}[t]
	\centering
	\subfigure[ImageNet-2 (Time).]{
		\includegraphics[width=0.22\textwidth]{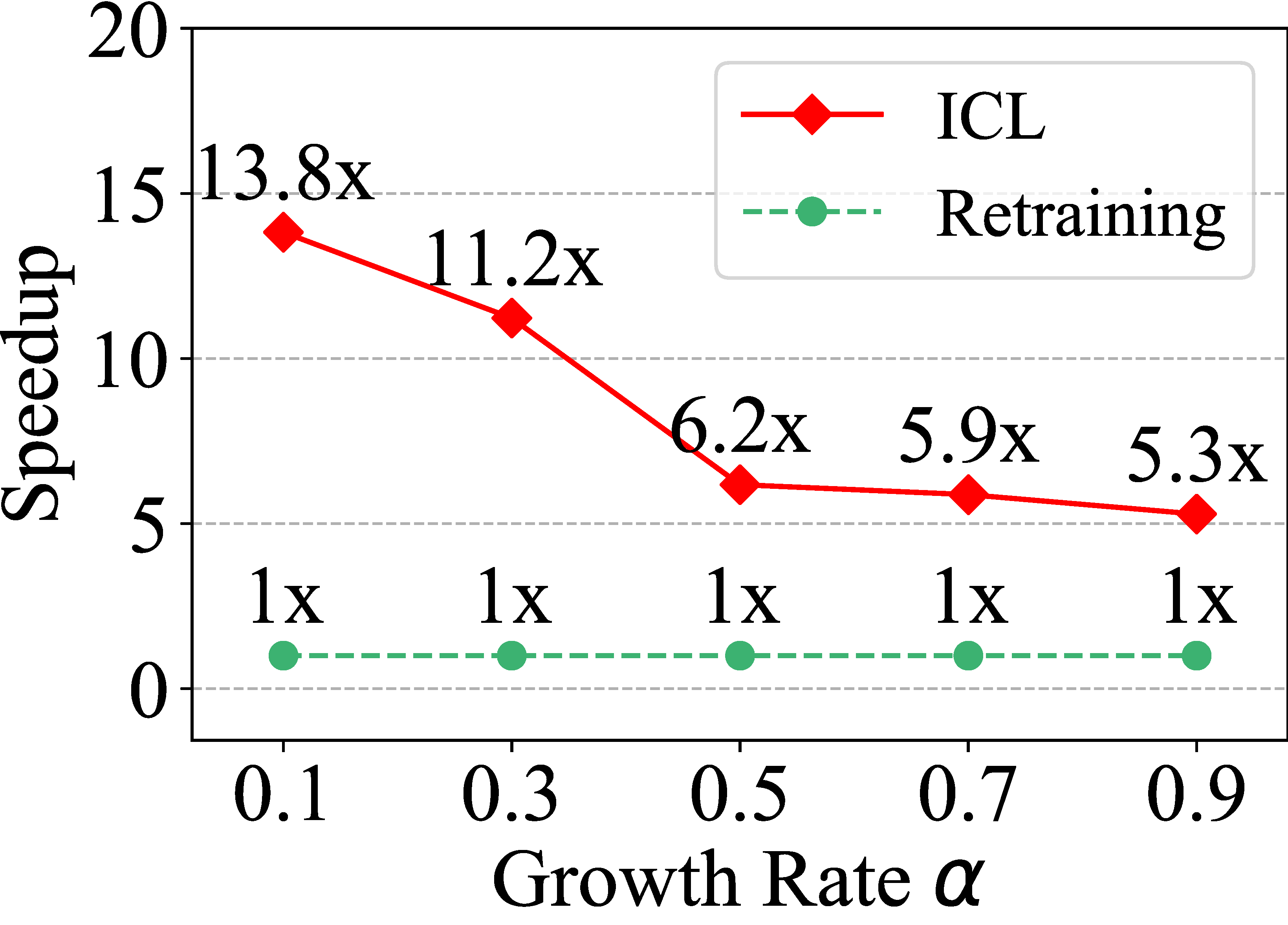}
	}
	\subfigure[ImageNet-2 (Epoch).]{
		\includegraphics[width=0.22\textwidth]{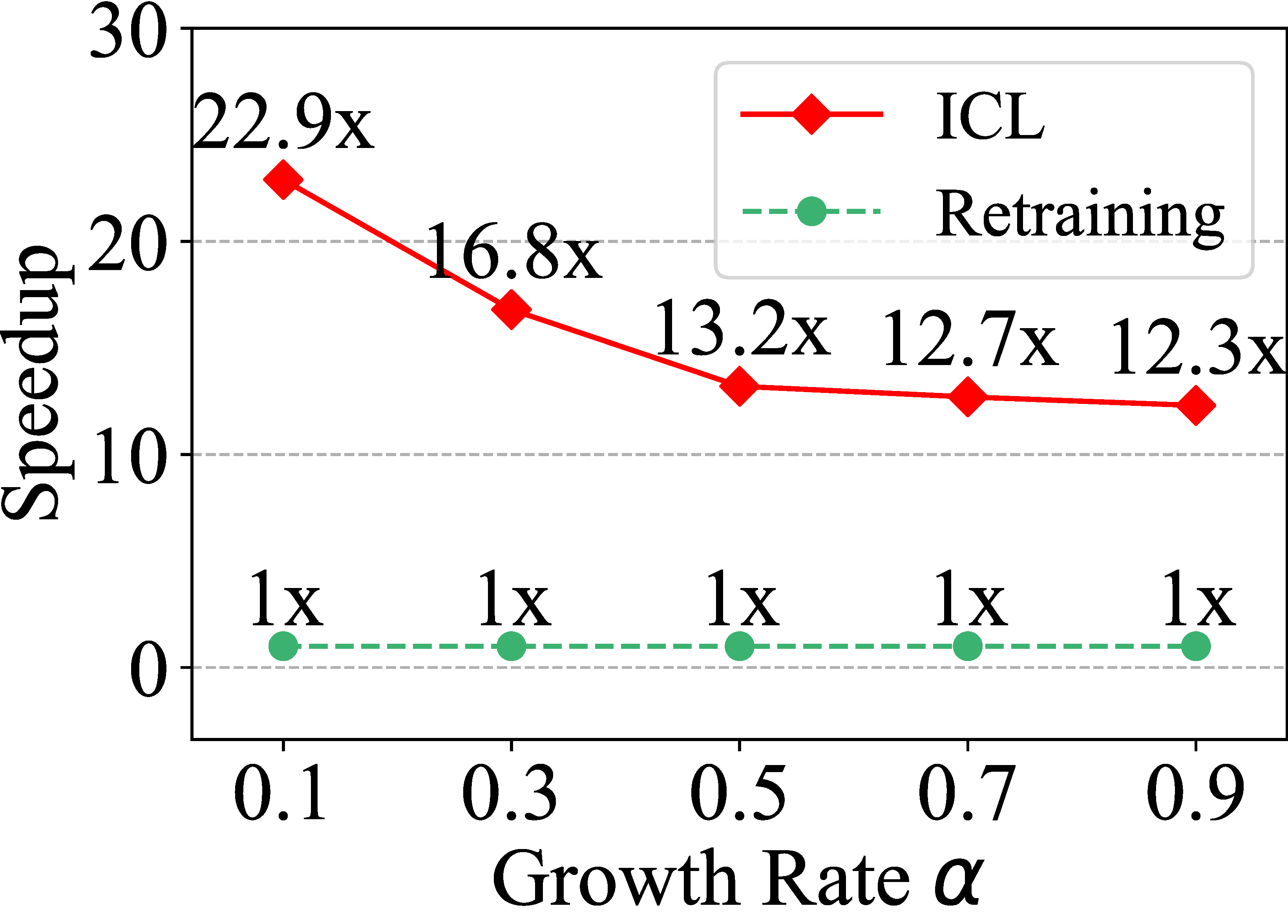}
	}
	\subfigure[PROTEINS (Time).]{
		\includegraphics[width=0.22\textwidth]{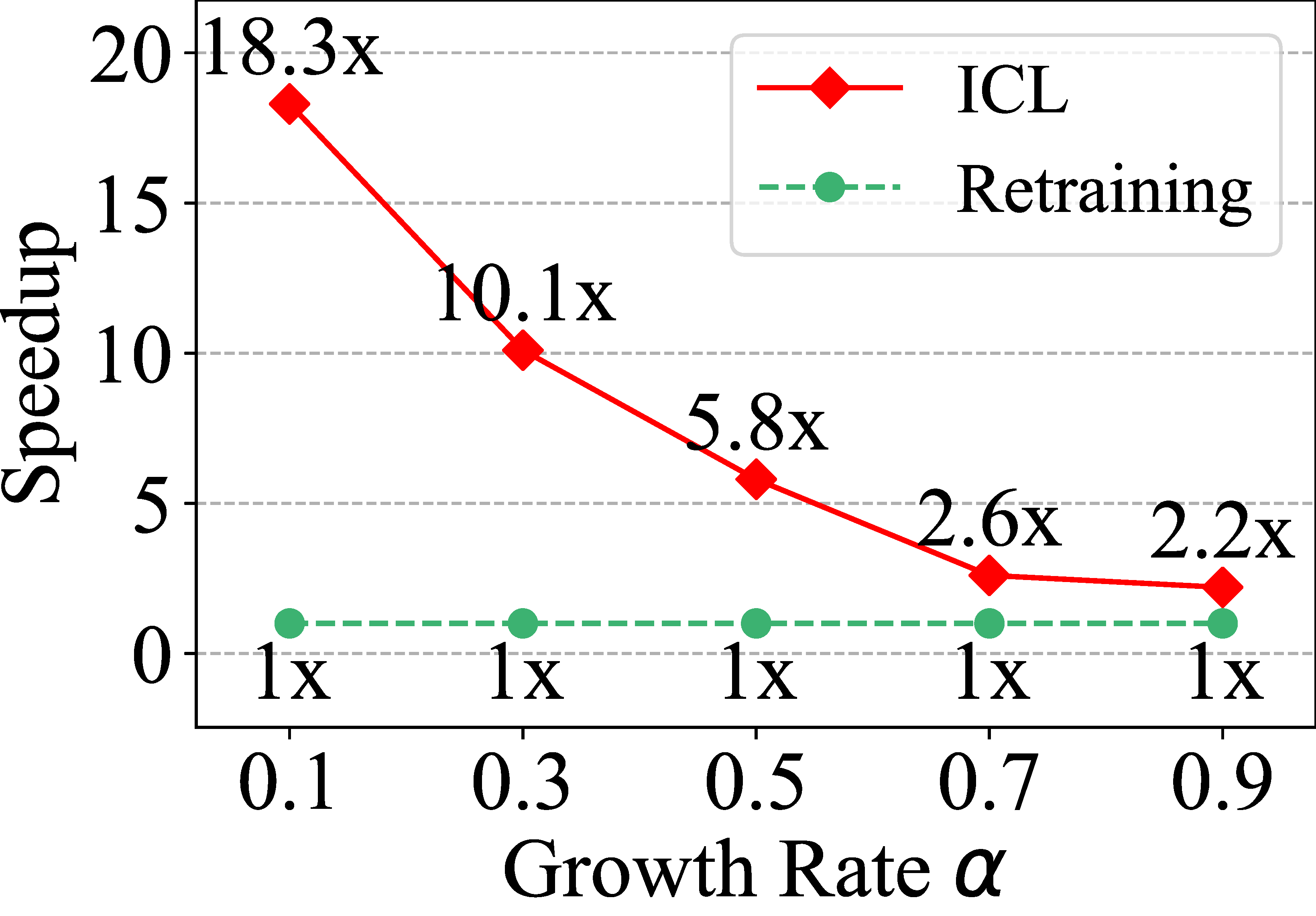}
	}
	\subfigure[PROTEINS (Epoch).]{
		\includegraphics[width=0.22\textwidth]{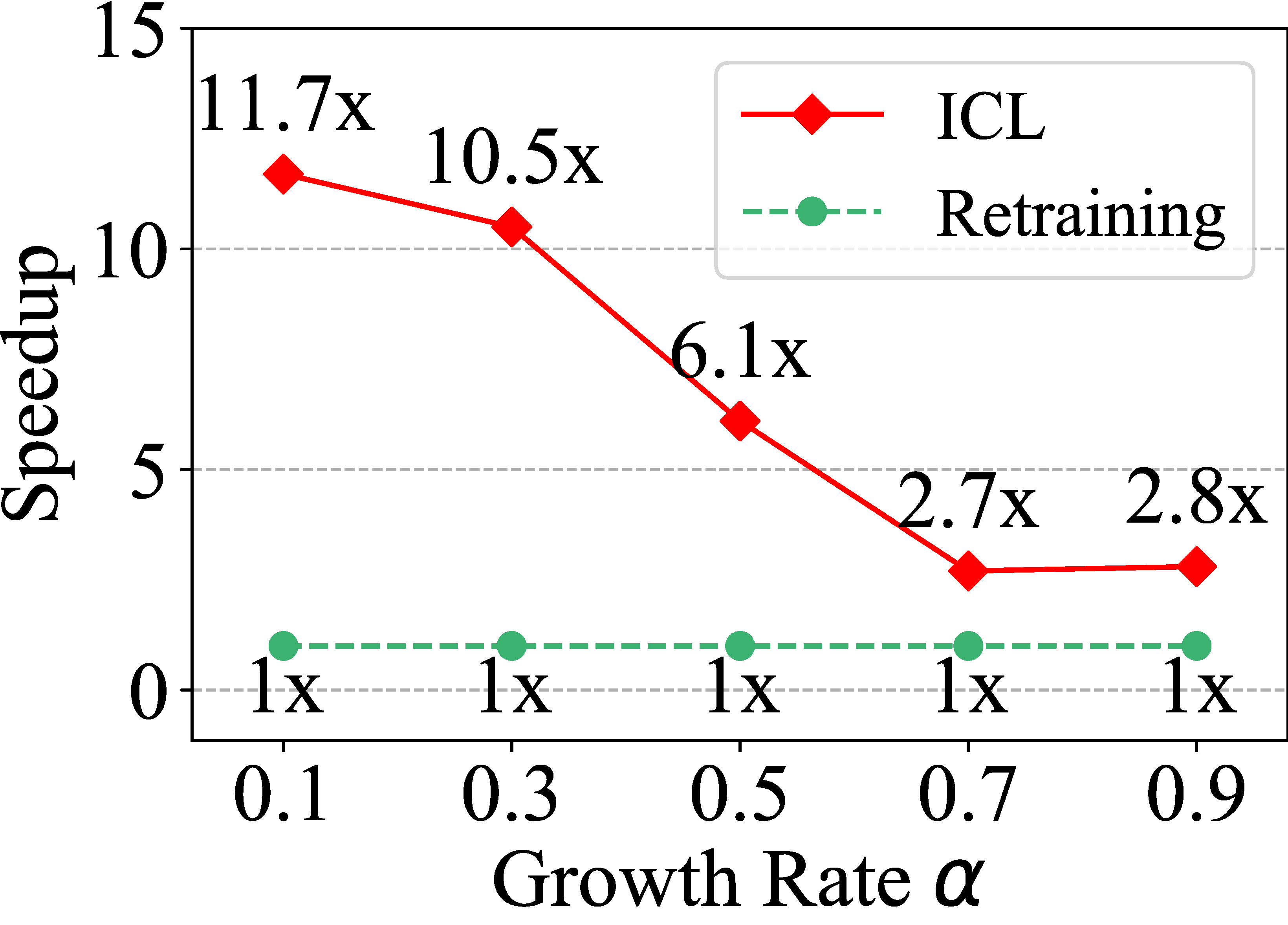}
	}
	\caption{Speedup of unbiased methods (ICL and retraining) with the variation of $\alpha$. For fairness, the biased ones (fine-tuning, replay, and distillation) are excluded.}
	\label{fig:alpha}
\end{figure}

\begin{figure}[t]
	\centering
	\subfigure[ImageNet-2.]{
		\includegraphics[width=0.22\textwidth]{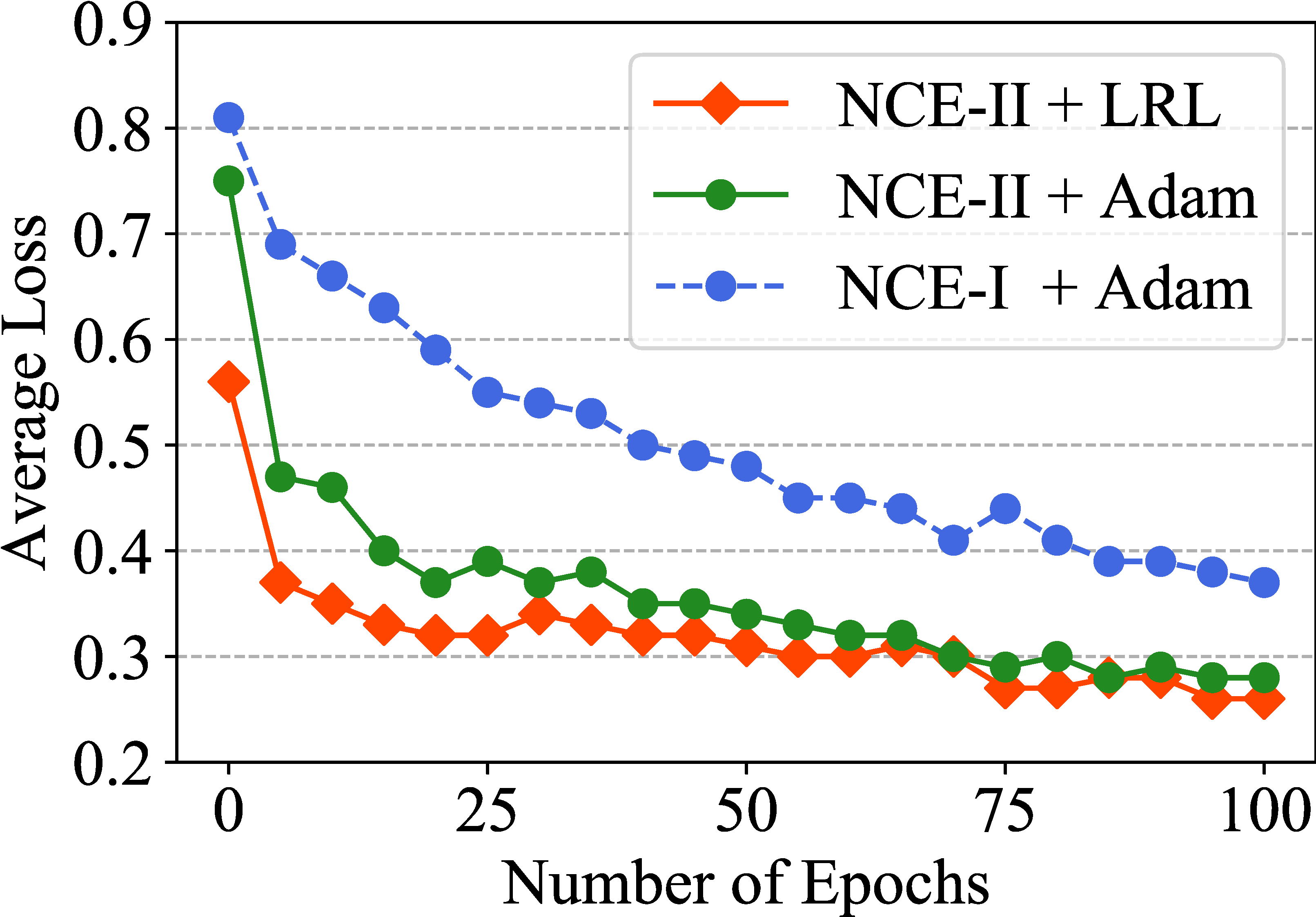}
	}
	\subfigure[REDDIT.]{
		\includegraphics[width=0.22\textwidth]{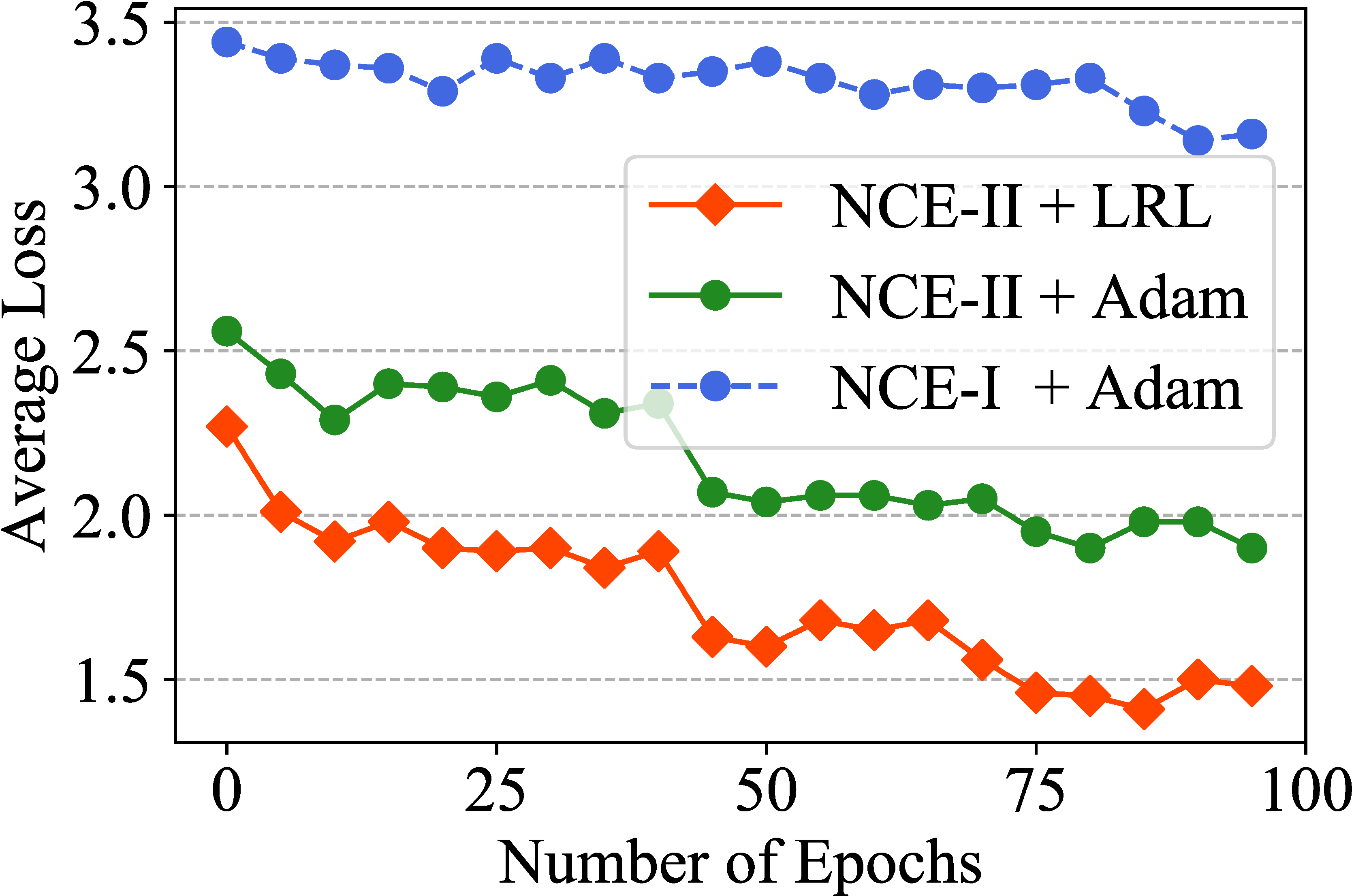}
	}
	\caption{Training process on ImageNet-2 and REDDIT.}
	\label{fig:training_loss}
\end{figure}

\section{Related Work}\label{sec:related_work}

\subsection{Contrastive Learning}
Contrastive learning (CL) is a self-supervised approach that aims at learning to discriminate the samples by contrasting the negative ones~\cite{liu2021self,jaiswal2021survey,li2021rwne}.
Contrastive Predictive Coding (CPC) proposed an InfoNCE loss to maximize the mutual information between the sample and its positive one~\cite{oord2018representation, sun2022graph}. Recently, InfoNCE loss has been widely used in computer vision (CV). SimCLR and MoCo subsequently use contrastive learning to generate the representations of images through different data augmentation methods~\cite{chen2020simple,he2020momentum} and achieve comparable results with state-of-the-art supervised approaches. Furthermore, contrastive learning is also well researched in graph representation learning (GRL). Graph Contrastive Coding (GCC) and GrpahCL apply InfoNCE to learn the representation of node and graph~\cite{qiu2020gcc,you2020graph}. The above studies follow a similar framework to Definition~\ref{def:contrastive_learning} which is the backbone of ICL.  

\subsection{Incremental Learning}
Incremental learning is a learning process where the new data is continuously coming from the environment~\cite{ade2013methods,peng2017incrementally,peng2021lime}. 
Most studies of incremental learning focus on supervised learning. For example, iCaRL proposed an incremental classifier and representation learning approach for supervised incremental learning~\cite{rebuffi2017icarl}. In addition, \cite{castro2018end} proposed an end-to-end incremental learning method for class-incremental issue. However, these methods are hard to implement into self-supervised contrastive learning due to the lack of labels. Recently, some works incorporate the replay and distillation technique into CL~\cite{lin2021continual,cha2021co}, while they still have the bias issue.

\section{Conclusion}\label{sec:conclusion}
In this paper, we studied the unbiased incremental learning issue in self-supervised contrastive learning and proposed \emph{Incremental Contrastive Learning} (ICL) framework. Specifically, we designed an Incremental InfoNCE (NCE-II) loss function to give an unbiased estimation of noise distribution change in incremental scenarios. Moreover, we proposed a meta-optimization algorithm with Learning Rate Learning (LRL) mechanism to achieve fast convergence. The experiments demonstrated the efficiency and effectiveness of the proposed ICL framework.

\begin{acks}
The corresponding author is Jianxin Li. 
This work is supported in part by NSFC through grant U20B2053 and NSF under grants III-1763325, III-1909323, III-2106758, and SaTC-1930941.
\end{acks}

\clearpage
\bibliographystyle{ACM-Reference-Format}
\balance
\bibliography{ICL}


\begin{thebibliography}{44}


\ifx \showCODEN    \undefined \def \showCODEN     #1{\unskip}     \fi
\ifx \showDOI      \undefined \def \showDOI       #1{#1}\fi
\ifx \showISBNx    \undefined \def \showISBNx     #1{\unskip}     \fi
\ifx \showISBNxiii \undefined \def \showISBNxiii  #1{\unskip}     \fi
\ifx \showISSN     \undefined \def \showISSN      #1{\unskip}     \fi
\ifx \showLCCN     \undefined \def \showLCCN      #1{\unskip}     \fi
\ifx \shownote     \undefined \def \shownote      #1{#1}          \fi
\ifx \showarticletitle \undefined \def \showarticletitle #1{#1}   \fi
\ifx \showURL      \undefined \def \showURL       {\relax}        \fi
\providecommand\bibfield[2]{#2}
\providecommand\bibinfo[2]{#2}
\providecommand\natexlab[1]{#1}
\providecommand\showeprint[2][]{arXiv:#2}

\bibitem[Ade and Deshmukh(2013)]%
        {ade2013methods}
\bibfield{author}{\bibinfo{person}{RR Ade} {and} \bibinfo{person}{PR
  Deshmukh}.} \bibinfo{year}{2013}\natexlab{}.
\newblock \showarticletitle{Methods for incremental learning: a survey}.
\newblock \bibinfo{journal}{\emph{International Journal of Data Mining \&
  Knowledge Management Process}} \bibinfo{volume}{3}, \bibinfo{number}{4}
  (\bibinfo{year}{2013}), \bibinfo{pages}{119}.
\newblock


\bibitem[Cai and Wang(2018)]%
        {cai2018simple}
\bibfield{author}{\bibinfo{person}{Chen Cai} {and} \bibinfo{person}{Yusu
  Wang}.} \bibinfo{year}{2018}\natexlab{}.
\newblock \showarticletitle{A simple yet effective baseline for non-attributed
  graph classification}.
\newblock \bibinfo{journal}{\emph{arXiv preprint arXiv:1811.03508}}
  (\bibinfo{year}{2018}).
\newblock


\bibitem[Castro et~al\mbox{.}(2018)]%
        {castro2018end}
\bibfield{author}{\bibinfo{person}{Francisco~M Castro},
  \bibinfo{person}{Manuel~J Mar{\'\i}n-Jim{\'e}nez},
  \bibinfo{person}{Nicol{\'a}s Guil}, \bibinfo{person}{Cordelia Schmid}, {and}
  \bibinfo{person}{Karteek Alahari}.} \bibinfo{year}{2018}\natexlab{}.
\newblock \showarticletitle{End-to-end incremental learning}. In
  \bibinfo{booktitle}{\emph{Proceedings of the European conference on computer
  vision (ECCV)}}. \bibinfo{pages}{233--248}.
\newblock


\bibitem[Cha et~al\mbox{.}(2021)]%
        {cha2021co}
\bibfield{author}{\bibinfo{person}{Hyuntak Cha}, \bibinfo{person}{Jaeho Lee},
  {and} \bibinfo{person}{Jinwoo Shin}.} \bibinfo{year}{2021}\natexlab{}.
\newblock \showarticletitle{Co2l: Contrastive continual learning}. In
  \bibinfo{booktitle}{\emph{Proceedings of the IEEE/CVF International
  Conference on Computer Vision (ICCV)}}. \bibinfo{pages}{9516--9525}.
\newblock


\bibitem[Chen et~al\mbox{.}(2020)]%
        {chen2020simple}
\bibfield{author}{\bibinfo{person}{Ting Chen}, \bibinfo{person}{Simon
  Kornblith}, \bibinfo{person}{Mohammad Norouzi}, {and}
  \bibinfo{person}{Geoffrey Hinton}.} \bibinfo{year}{2020}\natexlab{}.
\newblock \showarticletitle{A simple framework for contrastive learning of
  visual representations}. In \bibinfo{booktitle}{\emph{International
  conference on machine learning (ICML)}}. PMLR, \bibinfo{pages}{1597--1607}.
\newblock


\bibitem[Deng et~al\mbox{.}(2009)]%
        {deng2009imagenet}
\bibfield{author}{\bibinfo{person}{Jia Deng}, \bibinfo{person}{Wei Dong},
  \bibinfo{person}{Richard Socher}, \bibinfo{person}{Li-Jia Li},
  \bibinfo{person}{Kai Li}, {and} \bibinfo{person}{Li Fei-Fei}.}
  \bibinfo{year}{2009}\natexlab{}.
\newblock \showarticletitle{Imagenet: A large-scale hierarchical image
  database}. In \bibinfo{booktitle}{\emph{2009 IEEE conference on computer
  vision and pattern recognition (CVPR)}}. Ieee, \bibinfo{pages}{248--255}.
\newblock


\bibitem[Fang et~al\mbox{.}(2020)]%
        {fang2020seed}
\bibfield{author}{\bibinfo{person}{Zhiyuan Fang}, \bibinfo{person}{Jianfeng
  Wang}, \bibinfo{person}{Lijuan Wang}, \bibinfo{person}{Lei Zhang},
  \bibinfo{person}{Yezhou Yang}, {and} \bibinfo{person}{Zicheng Liu}.}
  \bibinfo{year}{2020}\natexlab{}.
\newblock \showarticletitle{SEED: Self-supervised Distillation For Visual
  Representation}. In \bibinfo{booktitle}{\emph{International Conference on
  Learning Representations (ICLR)}}.
\newblock


\bibitem[Finn et~al\mbox{.}(2017)]%
        {finn2017model}
\bibfield{author}{\bibinfo{person}{Chelsea Finn}, \bibinfo{person}{Pieter
  Abbeel}, {and} \bibinfo{person}{Sergey Levine}.}
  \bibinfo{year}{2017}\natexlab{}.
\newblock \showarticletitle{Model-agnostic meta-learning for fast adaptation of
  deep networks}. In \bibinfo{booktitle}{\emph{International conference on
  machine learning (ICML)}}. PMLR, \bibinfo{pages}{1126--1135}.
\newblock


\bibitem[Gutmann and Hyv{\"a}rinen(2010)]%
        {gutmann2010noise}
\bibfield{author}{\bibinfo{person}{Michael Gutmann} {and} \bibinfo{person}{Aapo
  Hyv{\"a}rinen}.} \bibinfo{year}{2010}\natexlab{}.
\newblock \showarticletitle{Noise-contrastive estimation: A new estimation
  principle for unnormalized statistical models}. In
  \bibinfo{booktitle}{\emph{Proceedings of the thirteenth international
  conference on artificial intelligence and statistics}}. JMLR Workshop and
  Conference Proceedings, \bibinfo{pages}{297--304}.
\newblock


\bibitem[Gutmann and Hyv{\"a}rinen(2012)]%
        {gutmann2012noise}
\bibfield{author}{\bibinfo{person}{Michael~U Gutmann} {and}
  \bibinfo{person}{Aapo Hyv{\"a}rinen}.} \bibinfo{year}{2012}\natexlab{}.
\newblock \showarticletitle{Noise-Contrastive Estimation of Unnormalized
  Statistical Models, with Applications to Natural Image Statistics.}
\newblock \bibinfo{journal}{\emph{Journal of machine learning research}}
  \bibinfo{volume}{13}, \bibinfo{number}{2} (\bibinfo{year}{2012}).
\newblock


\bibitem[He et~al\mbox{.}(2020)]%
        {he2020momentum}
\bibfield{author}{\bibinfo{person}{Kaiming He}, \bibinfo{person}{Haoqi Fan},
  \bibinfo{person}{Yuxin Wu}, \bibinfo{person}{Saining Xie}, {and}
  \bibinfo{person}{Ross Girshick}.} \bibinfo{year}{2020}\natexlab{}.
\newblock \showarticletitle{Momentum contrast for unsupervised visual
  representation learning}. In \bibinfo{booktitle}{\emph{Proceedings of the
  IEEE/CVF conference on computer vision and pattern recognition (CVPR)}}.
  \bibinfo{pages}{9729--9738}.
\newblock


\bibitem[He et~al\mbox{.}(2016)]%
        {he2016deep}
\bibfield{author}{\bibinfo{person}{Kaiming He}, \bibinfo{person}{Xiangyu
  Zhang}, \bibinfo{person}{Shaoqing Ren}, {and} \bibinfo{person}{Jian Sun}.}
  \bibinfo{year}{2016}\natexlab{}.
\newblock \showarticletitle{Deep residual learning for image recognition}. In
  \bibinfo{booktitle}{\emph{Proceedings of the IEEE conference on computer
  vision and pattern recognition (CVPR)}}. \bibinfo{pages}{770--778}.
\newblock


\bibitem[Jaiswal et~al\mbox{.}(2020)]%
        {jaiswal2021survey}
\bibfield{author}{\bibinfo{person}{Ashish Jaiswal},
  \bibinfo{person}{Ashwin~Ramesh Babu}, \bibinfo{person}{Mohammad~Zaki Zadeh},
  \bibinfo{person}{Debapriya Banerjee}, {and} \bibinfo{person}{Fillia
  Makedon}.} \bibinfo{year}{2020}\natexlab{}.
\newblock \showarticletitle{A survey on contrastive self-supervised learning}.
\newblock \bibinfo{journal}{\emph{Technologies}} \bibinfo{volume}{9},
  \bibinfo{number}{1} (\bibinfo{year}{2020}), \bibinfo{pages}{2}.
\newblock


\bibitem[Kaelbling et~al\mbox{.}(1996)]%
        {kaelbling1996reinforcement}
\bibfield{author}{\bibinfo{person}{Leslie~Pack Kaelbling},
  \bibinfo{person}{Michael~L Littman}, {and} \bibinfo{person}{Andrew~W Moore}.}
  \bibinfo{year}{1996}\natexlab{}.
\newblock \showarticletitle{Reinforcement learning: A survey}.
\newblock \bibinfo{journal}{\emph{Journal of artificial intelligence research}}
   \bibinfo{volume}{4} (\bibinfo{year}{1996}), \bibinfo{pages}{237--285}.
\newblock


\bibitem[Kipf and Welling(2017)]%
        {kipf2016semi}
\bibfield{author}{\bibinfo{person}{Thomas~N Kipf} {and} \bibinfo{person}{Max
  Welling}.} \bibinfo{year}{2017}\natexlab{}.
\newblock \showarticletitle{Semi-supervised classification with graph
  convolutional networks}. In \bibinfo{booktitle}{\emph{International
  Conference on Learning Representations (ICLR)}}.
\newblock


\bibitem[LeCun et~al\mbox{.}(1998)]%
        {lecun1998gradient}
\bibfield{author}{\bibinfo{person}{Yann LeCun}, \bibinfo{person}{L{\'e}on
  Bottou}, \bibinfo{person}{Yoshua Bengio}, {and} \bibinfo{person}{Patrick
  Haffner}.} \bibinfo{year}{1998}\natexlab{}.
\newblock \showarticletitle{Gradient-based learning applied to document
  recognition}.
\newblock \bibinfo{journal}{\emph{Proc. IEEE}} \bibinfo{volume}{86},
  \bibinfo{number}{11} (\bibinfo{year}{1998}), \bibinfo{pages}{2278--2324}.
\newblock


\bibitem[Li et~al\mbox{.}(2021a)]%
        {li2021rwne}
\bibfield{author}{\bibinfo{person}{Jianxin Li}, \bibinfo{person}{Cheng Ji},
  \bibinfo{person}{Hao Peng}, \bibinfo{person}{Yu He}, \bibinfo{person}{Yangqiu
  Song}, \bibinfo{person}{Xinmiao Zhang}, {and} \bibinfo{person}{Fanzhang
  Peng}.} \bibinfo{year}{2021}\natexlab{a}.
\newblock \showarticletitle{RWNE: A Scalable Random-Walk based Network
  Embedding Framework with Personalized Higher-order Proximity Preserved}.
\newblock \bibinfo{journal}{\emph{Journal of Artificial Intelligence Research}}
   \bibinfo{volume}{71} (\bibinfo{year}{2021}), \bibinfo{pages}{237--263}.
\newblock


\bibitem[Li et~al\mbox{.}(2022b)]%
        {li2022aiqoser}
\bibfield{author}{\bibinfo{person}{Jianxin Li}, \bibinfo{person}{Tianchen Zhu},
  \bibinfo{person}{Haoyi Zhou}, \bibinfo{person}{Qingyun Sun},
  \bibinfo{person}{Chunyang Jiang}, \bibinfo{person}{Shuai Zhang}, {and}
  \bibinfo{person}{Chunming Hu}.} \bibinfo{year}{2022}\natexlab{b}.
\newblock \showarticletitle{AIQoSer: Building the efficient Inference-QoS for
  AI Services}. In \bibinfo{booktitle}{\emph{2022 IEEE/ACM 30th International
  Symposium on Quality of Service (IWQoS)}}. IEEE, \bibinfo{pages}{1--10}.
\newblock


\bibitem[Li et~al\mbox{.}(2022a)]%
        {li2022survey}
\bibfield{author}{\bibinfo{person}{Qian Li}, \bibinfo{person}{Jianxin Li},
  \bibinfo{person}{Jiawei Sheng}, \bibinfo{person}{Shiyao Cui},
  \bibinfo{person}{Jia Wu}, \bibinfo{person}{Yiming Hei}, \bibinfo{person}{Hao
  Peng}, \bibinfo{person}{Shu Guo}, \bibinfo{person}{Lihong Wang},
  \bibinfo{person}{Amin Beheshti}, {et~al\mbox{.}}}
  \bibinfo{year}{2022}\natexlab{a}.
\newblock \showarticletitle{A Survey on Deep Learning Event Extraction:
  Approaches and Applications}.
\newblock \bibinfo{journal}{\emph{IEEE Transactions on Neural Networks and
  Learning Systems}} (\bibinfo{year}{2022}).
\newblock


\bibitem[Li et~al\mbox{.}(2021b)]%
        {li2021reinforcement}
\bibfield{author}{\bibinfo{person}{Qian Li}, \bibinfo{person}{Hao Peng},
  \bibinfo{person}{Jianxin Li}, \bibinfo{person}{Jia Wu},
  \bibinfo{person}{Yuanxing Ning}, \bibinfo{person}{Lihong Wang},
  \bibinfo{person}{S~Yu Philip}, {and} \bibinfo{person}{Zheng Wang}.}
  \bibinfo{year}{2021}\natexlab{b}.
\newblock \showarticletitle{Reinforcement learning-based dialogue guided event
  extraction to exploit argument relations}.
\newblock \bibinfo{journal}{\emph{IEEE/ACM Transactions on Audio, Speech, and
  Language Processing}}  \bibinfo{volume}{30} (\bibinfo{year}{2021}),
  \bibinfo{pages}{520--533}.
\newblock


\bibitem[Lillicrap et~al\mbox{.}(2016)]%
        {lillicrap2016continuous}
\bibfield{author}{\bibinfo{person}{Timothy~P Lillicrap},
  \bibinfo{person}{Jonathan~J Hunt}, \bibinfo{person}{Alexander Pritzel},
  \bibinfo{person}{Nicolas Heess}, \bibinfo{person}{Tom Erez},
  \bibinfo{person}{Yuval Tassa}, \bibinfo{person}{David Silver}, {and}
  \bibinfo{person}{Daan Wierstra}.} \bibinfo{year}{2016}\natexlab{}.
\newblock \showarticletitle{Continuous control with deep reinforcement
  learning.}. In \bibinfo{booktitle}{\emph{International Conference on Learning
  Representations (ICLR)}}.
\newblock


\bibitem[Lin et~al\mbox{.}(2021)]%
        {lin2021continual}
\bibfield{author}{\bibinfo{person}{Zhiwei Lin}, \bibinfo{person}{Yongtao Wang},
  {and} \bibinfo{person}{Hongxiang Lin}.} \bibinfo{year}{2021}\natexlab{}.
\newblock \showarticletitle{Continual Contrastive Self-supervised Learning for
  Image Classification}.
\newblock \bibinfo{journal}{\emph{arXiv preprint arXiv:2107.01776}}
  (\bibinfo{year}{2021}).
\newblock


\bibitem[Liu et~al\mbox{.}(2021)]%
        {liu2021self}
\bibfield{author}{\bibinfo{person}{Xiao Liu}, \bibinfo{person}{Fanjin Zhang},
  \bibinfo{person}{Zhenyu Hou}, \bibinfo{person}{Li Mian},
  \bibinfo{person}{Zhaoyu Wang}, \bibinfo{person}{Jing Zhang}, {and}
  \bibinfo{person}{Jie Tang}.} \bibinfo{year}{2021}\natexlab{}.
\newblock \showarticletitle{Self-supervised learning: Generative or
  contrastive}.
\newblock \bibinfo{journal}{\emph{IEEE Transactions on Knowledge and Data
  Engineering (TKDE)}} (\bibinfo{year}{2021}).
\newblock


\bibitem[Logeswaran and Lee(2018)]%
        {logeswaran2018efficient}
\bibfield{author}{\bibinfo{person}{Lajanugen Logeswaran} {and}
  \bibinfo{person}{Honglak Lee}.} \bibinfo{year}{2018}\natexlab{}.
\newblock \showarticletitle{An efficient framework for learning sentence
  representations}. In \bibinfo{booktitle}{\emph{International Conference on
  Learning Representations (ICLR)}}.
\newblock


\bibitem[Mai et~al\mbox{.}(2021)]%
        {mai2021supervised}
\bibfield{author}{\bibinfo{person}{Zheda Mai}, \bibinfo{person}{Ruiwen Li},
  \bibinfo{person}{Hyunwoo Kim}, {and} \bibinfo{person}{Scott Sanner}.}
  \bibinfo{year}{2021}\natexlab{}.
\newblock \showarticletitle{Supervised contrastive replay: Revisiting the
  nearest class mean classifier in online class-incremental continual
  learning}. In \bibinfo{booktitle}{\emph{Proceedings of the IEEE/CVF
  Conference on Computer Vision and Pattern Recognition (CVPR)}}.
  \bibinfo{pages}{3589--3599}.
\newblock


\bibitem[McCloskey and Cohen(1989)]%
        {mccloskey1989catastrophic}
\bibfield{author}{\bibinfo{person}{Michael McCloskey} {and}
  \bibinfo{person}{Neal~J Cohen}.} \bibinfo{year}{1989}\natexlab{}.
\newblock \showarticletitle{Catastrophic interference in connectionist
  networks: The sequential learning problem}.
\newblock In \bibinfo{booktitle}{\emph{Psychology of learning and motivation}}.
  Vol.~\bibinfo{volume}{24}. \bibinfo{publisher}{Elsevier},
  \bibinfo{pages}{109--165}.
\newblock


\bibitem[Mermillod et~al\mbox{.}(2013)]%
        {mermillod2013stability}
\bibfield{author}{\bibinfo{person}{Martial Mermillod},
  \bibinfo{person}{Aur{\'e}lia Bugaiska}, {and} \bibinfo{person}{Patrick
  Bonin}.} \bibinfo{year}{2013}\natexlab{}.
\newblock \showarticletitle{The stability-plasticity dilemma: Investigating the
  continuum from catastrophic forgetting to age-limited learning effects}.
\newblock \bibinfo{journal}{\emph{Frontiers in psychology}}
  \bibinfo{volume}{4} (\bibinfo{year}{2013}), \bibinfo{pages}{504}.
\newblock


\bibitem[Mnih et~al\mbox{.}(2013)]%
        {mnih2013playing}
\bibfield{author}{\bibinfo{person}{Volodymyr Mnih}, \bibinfo{person}{Koray
  Kavukcuoglu}, \bibinfo{person}{David Silver}, \bibinfo{person}{Alex Graves},
  \bibinfo{person}{Ioannis Antonoglou}, \bibinfo{person}{Daan Wierstra}, {and}
  \bibinfo{person}{Martin Riedmiller}.} \bibinfo{year}{2013}\natexlab{}.
\newblock \showarticletitle{Playing atari with deep reinforcement learning}.
\newblock \bibinfo{journal}{\emph{arXiv preprint arXiv:1312.5602}}
  (\bibinfo{year}{2013}).
\newblock


\bibitem[Morris et~al\mbox{.}(2020)]%
        {morris2020tudataset}
\bibfield{author}{\bibinfo{person}{Christopher Morris}, \bibinfo{person}{Nils~M
  Kriege}, \bibinfo{person}{Franka Bause}, \bibinfo{person}{Kristian Kersting},
  \bibinfo{person}{Petra Mutzel}, {and} \bibinfo{person}{Marion Neumann}.}
  \bibinfo{year}{2020}\natexlab{}.
\newblock \showarticletitle{Tudataset: A collection of benchmark datasets for
  learning with graphs}.
\newblock \bibinfo{journal}{\emph{arXiv preprint arXiv:2007.08663}}
  (\bibinfo{year}{2020}).
\newblock


\bibitem[Peng et~al\mbox{.}(2017)]%
        {peng2017incrementally}
\bibfield{author}{\bibinfo{person}{Hao Peng}, \bibinfo{person}{Jianxin Li},
  \bibinfo{person}{Yangqiu Song}, {and} \bibinfo{person}{Yaopeng Liu}.}
  \bibinfo{year}{2017}\natexlab{}.
\newblock \showarticletitle{Incrementally learning the hierarchical softmax
  function for neural language models}. In
  \bibinfo{booktitle}{\emph{Proceedings of the AAAI Conference on Artificial
  Intelligence}}, Vol.~\bibinfo{volume}{31}.
\newblock


\bibitem[Peng et~al\mbox{.}(2021)]%
        {peng2021lime}
\bibfield{author}{\bibinfo{person}{Hao Peng}, \bibinfo{person}{Renyu Yang},
  \bibinfo{person}{Zheng Wang}, \bibinfo{person}{Jianxin Li},
  \bibinfo{person}{Lifang He}, \bibinfo{person}{S~Yu Philip},
  \bibinfo{person}{Albert~Y Zomaya}, {and} \bibinfo{person}{Rajiv Ranjan}.}
  \bibinfo{year}{2021}\natexlab{}.
\newblock \showarticletitle{Lime: Low-cost and incremental learning for dynamic
  heterogeneous information networks}.
\newblock \bibinfo{journal}{\emph{IEEE Trans. Comput.}} \bibinfo{volume}{71},
  \bibinfo{number}{3} (\bibinfo{year}{2021}), \bibinfo{pages}{628--642}.
\newblock


\bibitem[Qiu et~al\mbox{.}(2020)]%
        {qiu2020gcc}
\bibfield{author}{\bibinfo{person}{Jiezhong Qiu}, \bibinfo{person}{Qibin Chen},
  \bibinfo{person}{Yuxiao Dong}, \bibinfo{person}{Jing Zhang},
  \bibinfo{person}{Hongxia Yang}, \bibinfo{person}{Ming Ding},
  \bibinfo{person}{Kuansan Wang}, {and} \bibinfo{person}{Jie Tang}.}
  \bibinfo{year}{2020}\natexlab{}.
\newblock \showarticletitle{Gcc: Graph contrastive coding for graph neural
  network pre-training}. In \bibinfo{booktitle}{\emph{Proceedings of the 26th
  ACM SIGKDD International Conference on Knowledge Discovery \& Data Mining
  (KDD)}}. \bibinfo{pages}{1150--1160}.
\newblock


\bibitem[Rebuffi et~al\mbox{.}(2017)]%
        {rebuffi2017icarl}
\bibfield{author}{\bibinfo{person}{Sylvestre-Alvise Rebuffi},
  \bibinfo{person}{Alexander Kolesnikov}, \bibinfo{person}{Georg Sperl}, {and}
  \bibinfo{person}{Christoph~H Lampert}.} \bibinfo{year}{2017}\natexlab{}.
\newblock \showarticletitle{icarl: Incremental classifier and representation
  learning}. In \bibinfo{booktitle}{\emph{Proceedings of the IEEE conference on
  Computer Vision and Pattern Recognition (CVPR)}}.
  \bibinfo{pages}{2001--2010}.
\newblock


\bibitem[Serra et~al\mbox{.}(2018)]%
        {serra2018overcoming}
\bibfield{author}{\bibinfo{person}{Joan Serra}, \bibinfo{person}{Didac Suris},
  \bibinfo{person}{Marius Miron}, {and} \bibinfo{person}{Alexandros
  Karatzoglou}.} \bibinfo{year}{2018}\natexlab{}.
\newblock \showarticletitle{Overcoming catastrophic forgetting with hard
  attention to the task}. In \bibinfo{booktitle}{\emph{International Conference
  on Machine Learning (ICML)}}. PMLR, \bibinfo{pages}{4548--4557}.
\newblock


\bibitem[Shapiro(2003)]%
        {shapiro2003monte}
\bibfield{author}{\bibinfo{person}{Alexander Shapiro}.}
  \bibinfo{year}{2003}\natexlab{}.
\newblock \showarticletitle{Monte Carlo sampling methods}.
\newblock \bibinfo{journal}{\emph{Handbooks in operations research and
  management science}}  \bibinfo{volume}{10} (\bibinfo{year}{2003}),
  \bibinfo{pages}{353--425}.
\newblock


\bibitem[Silver et~al\mbox{.}(2014)]%
        {silver2014deterministic}
\bibfield{author}{\bibinfo{person}{David Silver}, \bibinfo{person}{Guy Lever},
  \bibinfo{person}{Nicolas Heess}, \bibinfo{person}{Thomas Degris},
  \bibinfo{person}{Daan Wierstra}, {and} \bibinfo{person}{Martin Riedmiller}.}
  \bibinfo{year}{2014}\natexlab{}.
\newblock \showarticletitle{Deterministic policy gradient algorithms}. In
  \bibinfo{booktitle}{\emph{International conference on machine learning
  (ICML)}}. PMLR, \bibinfo{pages}{387--395}.
\newblock


\bibitem[Sun et~al\mbox{.}(2022)]%
        {sun2022graph}
\bibfield{author}{\bibinfo{person}{Qingyun Sun}, \bibinfo{person}{Jianxin Li},
  \bibinfo{person}{Hao Peng}, \bibinfo{person}{Jia Wu},
  \bibinfo{person}{Xingcheng Fu}, \bibinfo{person}{Cheng Ji}, {and}
  \bibinfo{person}{S~Yu Philip}.} \bibinfo{year}{2022}\natexlab{}.
\newblock \showarticletitle{Graph structure learning with variational
  information bottleneck}. In \bibinfo{booktitle}{\emph{Proceedings of the AAAI
  Conference on Artificial Intelligence}}, Vol.~\bibinfo{volume}{36}.
  \bibinfo{pages}{4165--4174}.
\newblock


\bibitem[Sun et~al\mbox{.}(2021)]%
        {sun2021sugar}
\bibfield{author}{\bibinfo{person}{Qingyun Sun}, \bibinfo{person}{Jianxin Li},
  \bibinfo{person}{Hao Peng}, \bibinfo{person}{Jia Wu},
  \bibinfo{person}{Yuanxing Ning}, \bibinfo{person}{Philip~S Yu}, {and}
  \bibinfo{person}{Lifang He}.} \bibinfo{year}{2021}\natexlab{}.
\newblock \showarticletitle{Sugar: Subgraph neural network with reinforcement
  pooling and self-supervised mutual information mechanism}. In
  \bibinfo{booktitle}{\emph{Proceedings of the Web Conference 2021}}.
  \bibinfo{pages}{2081--2091}.
\newblock


\bibitem[Sutton et~al\mbox{.}(1999)]%
        {sutton1999policy}
\bibfield{author}{\bibinfo{person}{Richard~S Sutton}, \bibinfo{person}{David
  McAllester}, \bibinfo{person}{Satinder Singh}, {and} \bibinfo{person}{Yishay
  Mansour}.} \bibinfo{year}{1999}\natexlab{}.
\newblock \showarticletitle{Policy gradient methods for reinforcement learning
  with function approximation}.
\newblock \bibinfo{journal}{\emph{Advances in neural information processing
  systems (NeurIPS)}}  \bibinfo{volume}{12} (\bibinfo{year}{1999}).
\newblock


\bibitem[Uhlenbeck and Ornstein(1930)]%
        {uhlenbeck1930theory}
\bibfield{author}{\bibinfo{person}{George~E Uhlenbeck} {and}
  \bibinfo{person}{Leonard~S Ornstein}.} \bibinfo{year}{1930}\natexlab{}.
\newblock \showarticletitle{On the theory of the Brownian motion}.
\newblock \bibinfo{journal}{\emph{Physical review}} \bibinfo{volume}{36},
  \bibinfo{number}{5} (\bibinfo{year}{1930}), \bibinfo{pages}{823}.
\newblock


\bibitem[Van~den Oord et~al\mbox{.}(2018)]%
        {oord2018representation}
\bibfield{author}{\bibinfo{person}{Aaron Van~den Oord}, \bibinfo{person}{Yazhe
  Li}, {and} \bibinfo{person}{Oriol Vinyals}.} \bibinfo{year}{2018}\natexlab{}.
\newblock \showarticletitle{Representation learning with contrastive predictive
  coding}.
\newblock \bibinfo{journal}{\emph{arXiv e-prints}} (\bibinfo{year}{2018}),
  \bibinfo{pages}{arXiv--1807}.
\newblock


\bibitem[Wu et~al\mbox{.}(2018)]%
        {wu2018unsupervised}
\bibfield{author}{\bibinfo{person}{Zhirong Wu}, \bibinfo{person}{Yuanjun
  Xiong}, \bibinfo{person}{Stella~X Yu}, {and} \bibinfo{person}{Dahua Lin}.}
  \bibinfo{year}{2018}\natexlab{}.
\newblock \showarticletitle{Unsupervised feature learning via non-parametric
  instance discrimination}. In \bibinfo{booktitle}{\emph{Proceedings of the
  IEEE conference on computer vision and pattern recognition (CVPR)}}.
  \bibinfo{pages}{3733--3742}.
\newblock


\bibitem[Xu et~al\mbox{.}(2017)]%
        {xu2017reinforcement}
\bibfield{author}{\bibinfo{person}{Chang Xu}, \bibinfo{person}{Tao Qin},
  \bibinfo{person}{Gang Wang}, {and} \bibinfo{person}{Tie-Yan Liu}.}
  \bibinfo{year}{2017}\natexlab{}.
\newblock \showarticletitle{Reinforcement learning for learning rate control}.
\newblock \bibinfo{journal}{\emph{arXiv preprint arXiv:1705.11159}}
  (\bibinfo{year}{2017}).
\newblock


\bibitem[You et~al\mbox{.}(2020)]%
        {you2020graph}
\bibfield{author}{\bibinfo{person}{Yuning You}, \bibinfo{person}{Tianlong
  Chen}, \bibinfo{person}{Yongduo Sui}, \bibinfo{person}{Ting Chen},
  \bibinfo{person}{Zhangyang Wang}, {and} \bibinfo{person}{Yang Shen}.}
  \bibinfo{year}{2020}\natexlab{}.
\newblock \showarticletitle{Graph contrastive learning with augmentations}.
\newblock \bibinfo{journal}{\emph{Advances in Neural Information Processing
  Systems (NeurIPS)}}  \bibinfo{volume}{33} (\bibinfo{year}{2020}),
  \bibinfo{pages}{5812--5823}.
\newblock


\end{thebibliography}


\end{document}